
\documentclass[conference]{IEEEtran}
%

\usepackage{times}  
\usepackage{helvet} 
\usepackage{courier}  
\usepackage[hyphens]{url}  
\usepackage{graphicx} 
\urlstyle{rm} 
\usepackage{graphicx}  

\usepackage{booktabs}       
\usepackage{amsfonts}       
\usepackage{amsthm}
\usepackage{algorithm}
\usepackage{amssymb}
\usepackage{graphicx}
\usepackage{subcaption,epsfig,epstopdf,color}
\usepackage{amsmath,bm}
\usepackage{algpseudocode}%
\usepackage{textcomp}
\usepackage{xcolor}
\newtheorem{definition}{Definition}
\newtheorem{lemma}{Lemma}
\newtheorem{assumption}{Assumption}
\newtheorem{remark}{Remark}
\newtheorem{theorem}{Theorem}


%

%

%
\ifCLASSINFOpdf
\else
\fi
\hyphenation{op-tical net-works semi-conduc-tor}

\begin{document}
%
\title{Differentially Private $\ell_1$-norm Linear Regression with Heavy-tailed Data}

%
\author{\IEEEauthorblockN{Di Wang\IEEEauthorrefmark{1}  and 
Jinhui Xu\IEEEauthorrefmark{3}}
\IEEEauthorblockA{\IEEEauthorrefmark{1}Division of Computer, Electrical and Mathematical Sciences and Engineering\\
King Abdullah University of Science and Technology,
Thuwal, Saudi Arabia. Email: di.wang@kaust.edu.sa}
\IEEEauthorblockA{\IEEEauthorrefmark{3}Department of Computer Science and Engineering\\ State University of New York at Buffalo, Buffalo, NY. 
Email: jinhui@buffalo.edu}}


\maketitle

\begin{abstract}
We study the problem of Differentially Private Stochastic Convex Optimization (DP-SCO) with heavy-tailed data. Specifically, 
we focus on the $\ell_1$-norm linear regression in the $\epsilon$-DP model. While most of the previous work focuses on the case where the loss function is Lipschitz, here we only need to assume the variates has bounded moments. Firstly, we study the case where the $\ell_2$ norm of data has bounded second order moment. We propose an algorithm which is based on the exponential mechanism and show that it is possible to achieve an upper bound of $\tilde{O}(\sqrt{\frac{d}{n\epsilon}})$ (with high probability). Next, we relax the assumption to  bounded $\theta$-th order moment with some $\theta\in (1, 2)$ and show that it is possible to achieve an upper bound of $\tilde{O}(({\frac{d}{n\epsilon}})^\frac{\theta-1}{\theta})$. Our algorithms can also
be extended to more relaxed cases where only each coordinate
of the data has bounded moments, and we can get an upper bound of $\tilde{O}({\frac{d}{\sqrt{n\epsilon}}})$ and $\tilde{O}({\frac{d}{({n\epsilon})^\frac{\theta-1}{\theta}}})$ in the second and $\theta$-th moment case  respectively. 
\end{abstract}


%
\IEEEpeerreviewmaketitle

\section{Introduction}
As one of the most fundamental problem in both statistical machine learning and differential privacy (DP). Stochastic Convex Optimization under the differential privacy \cite{dwork2006calibrating} constraint, {\em i.e.,} Differentially Private Stochastic Convex Optimization (DPSCO), has received much attentions in recent years \cite{bassily2014private,wang2019differentially2,wang2019differentially,bassily2020,bassily2020stability,feldman2020private,song2020characterizing,su2021faster,asi2021private,bassily2021non,kulkarni2021private,bassily2021differentially,asi2021private2,DBLP:conf/ijcai/XueYH021}. In DPSCO, we have a loss function $\ell$ and an $n$-size dataset $D=\{(x_1, y_1), (x_2, y_2), \cdots, (x_n, y_n)\}$ where each pair of the variate and the label/response $(x_i, y_i)$ is i.i.d. sampled from some unknown distribution $\mathcal{D}$. The goal of DPSCO is to privately minimize the population risk function $L_\mathcal{D}(w)=\mathbb{E}_{(x, y)\sim \mathcal{D}}[\ell(w, x, y))]$ over a parameter space $\mathcal{W}\subseteq \mathbb{R}^d$, {\em i.e.,} we aim to design some DP algorithm whose output $w^{priv}$ makes the excess population risk  $L_\mathcal{D}(w^{priv})-\min_{w\in \mathcal{W}} L_\mathcal{D}(w^{priv})$ to be as small as possible with high probability. 

Although DPSCO and it empirical form, differentially Private Empirical Risk Minimization (DPERM), have been extensively studied for many years and there is a long of work attacked the problems from different perspectives, most of those work needs to assume the data distribution is bounded which indicates that the loss function is Lipschitz, which is unrealistic and does not always hold in practice. To relax the Lipschitz assumption, start from \cite{wang2020differentially}, there are several work have studied DPSCO with heavy-tailed data recently, where  the Lipschitz assumption is removed and we only need to assume that the distribution of the gradient of the loss function has bounded finite order of moments instead \cite{wang2020differentially,hu2021high,kamath2021improved}.  However, there are still several open problems in DP-SCO with heavy-tailed data. For example,  previous work only considered the case where the loss function is smooth. Moreover, most of those work studied the $(\epsilon, \delta)$-DP model and its behaviors in the $\epsilon$-DP model is still far from well-understood. Thirdly, all the previous results need to assume the data or the gradient of the loss  has at least bounded second (order) moments and cannot be extended to a more relaxed case where it has only $\theta$-th moment with $\theta\in (1, 2)$. In this paper, we  continue along the directions of these open problems. Specifically, we study the problem of  $\ell_1$-norm linear regression ({\em i.e., $\ell(w, x, y)=|\langle x, w\rangle-y|$}) with heavy-tailed data in $\epsilon$-DP model. Our contributions can be summarized as follows. 
\begin{itemize}
    \item We first consider the case where the second moment of the $\ell_2$-norm of the variate $x$, {\em i.e., } $\mathbb{E}\|x\|^2_2$, is bounded. Specifically, we propose a method which is based on the exponential mechanism and show that it is possible to achieve an upper bound of $\tilde{O}(\sqrt{\frac{d}{n\epsilon}})$ with high probability. Moreover, instead of the $\ell_2$-norm, we also consider the case where each coordinate of $x$ has bounded second moment, {\em i.e.,} $\mathbb{E}|x_j|^2< \infty$ for every $j\in [d]$. We show that our algorithm could achieve an error bound of $\tilde{O}({\frac{d}{\sqrt{n\epsilon}}})$. 
    \item We then investigate a relaxed case where the data only has $\theta$-th moment with $\theta\in (1, 2)$. First, similar to the second moment case, we assume that  $\mathbb{E}\|x\|^\theta_2<\infty$ and show it is possible to achieve a rate of $\tilde{O}\big(({\frac{d}{n\epsilon}})^\frac{\theta-1}{\theta}\big )$. Then, under the relaxed condition that $\mathbb{E}|x_j|^\theta< \infty$ for every $j\in [d]$, we show that our algorithm could achieve an error of $\tilde{O}\big(\frac{d}{(n\epsilon)^\frac{\theta-1}{\theta}}\big)$. To the best of our knowledge, this is the first theoretical result of DPSCO with heavy-tailed data that only has $\theta$-th moment with $\theta\in (1, 2)$.
\end{itemize}
\section{Related Work}
Although there is a long list of work studied either DPSCO/DPERM or robust estimation. DPSCO with heavy-tailed data is not well-understood. Below we will mentioned the related work on DPSCO with heavy-tailed data and private and robust mean estimation. 

For private estimation for heavy-tailed distribution,  \cite{barber2014privacy} provides the first study on private mean estimation for distributions with bounded second moment and proposes the minimax private rates.  Later, \cite{kamath2020private} also studies the heavy-tailed mean estimation with a relaxed assumption, which is also studied by \cite{liu2021robust,tao2021optimal} recently. However,  due to the complex nature of $\ell_1$ regression, the methods for mean estimation  cannot be used to our problem. 
Moreover, it is unknown whether their methods could be extended to the case where each coordinate of the data has only $\theta$-th order moment with $\theta\in (1, 2)$. 

For DPSCO with heavy-tailed data, \cite{wang2020differentially} first studies the problem by proposing three methods based on different assumptions. The first method is based on the smooth sensitivity and the Sample-and-Aggregate framework \cite{nissim2007smooth}. However, it needs enormous assumptions and its error bound is quite large. Their second method is still based on the smooth sensitivity \cite{bun2019average}. However, it needs to assume the distribution is sub-exponential.\cite{wang2020differentially} also provides a new private estimator motivated by the previous work in robust statistics. \cite{kamath2021improved} recently revisits the problem and improves the (expected) excess population risk for both convex and strongly convex loss functions. It also provides the lower bounds of the mean estimation problem in both $(\epsilon, \delta)$-DP and $\epsilon$-DP models. However, as we mentioned earlier, all of those algorithms are for $(\epsilon, \delta)$-DP model and need to assume the loss function is smooth. Thus, their methods cannot be used to our problem. 
Later, \cite{hu2021high} studies the problem in the high dimensional space where the dimension could far greater than the data size. It focuses the case where the loss function is smooth and the constraint set is some polytope or some $\ell_0$-norm ball, which is quite different with our settings. 

\section{Preliminaries}
\begin{definition}[Differential Privacy \cite{dwork2006calibrating}]\label{def:3.1}
	Given a data universe $\mathcal{X}$, we say that two datasets $D,D'\subseteq \mathcal{X}$ are neighbors if they differ by only one entry, which is denoted as $D \sim D'$. A randomized algorithm $\mathcal{A}$ is $\epsilon$-differentially private (DP) if for all neighboring datasets $D,D'$ and for all events $S$ in the output space of $\mathcal{A}$, the following holds
	\(\text{Pr}(\mathcal{A}(D)\in S)\leq e^{\epsilon} \text{Pr}(\mathcal{A}(D')\in S).\)
\end{definition}
	\begin{definition}[Exponential Mechanism]
		The Exponential Mechanism allows differentially private computation over arbitrary domains and range $\mathcal{R}$, parametrized by a score function $u(D,r)$ which maps a pair of input data set $D$ and candidate result $r\in \mathcal{R}$ to a real valued score. With the score function $u$ and privacy budget $\epsilon$, the mechanism yields an output with exponential bias in favor of high scoring outputs. Let $\mathcal{M}(D,x,R)$ denote the exponential mechanism, and $\Delta$ be the sensitivity of $u$ in the range $R$, 
		$\Delta=\max_{r\in \mathcal{R}}\max_{D\sim D'}|u(D,r)-u(D',r)|.$
		Then if $\mathcal{M}(D,x,R)$ selects and outputs an element $r\in \mathcal{R}$ with probability proportional to $\exp(\frac{\epsilon u(D,r)}{2\Delta u})$, it  preserves $\epsilon$-differential privacy.
	\end{definition}
		\begin{lemma}\cite{dwork2014algorithmic}\label{lemma:exp}
		For the exponential mechanism $\mathcal{M}(D,u,\mathcal{R})$, we have 
		\begin{equation*}
		\text{Pr}\{u(\mathcal{M}(D,u,\mathcal{R}))\leq OPT_u(x)-\frac{2\Delta u}{\epsilon}(\ln|\mathcal{R}|+t)\}\leq e^{-t}.
		\end{equation*}
		where $OPT_u(x)$ is the highest score in the range $\mathcal{R}$, {\em i.e.} $\max_{r\in \mathcal{R}}u(D,r)$.
	\end{lemma}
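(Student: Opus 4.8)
The plan is to analyze the sampling distribution of the exponential mechanism directly and bound the total probability mass it places on low-scoring candidates. Write $OPT := \max_{r\in\mathcal{R}} u(D,r)$ (the quantity denoted $OPT_u(x)$ in the statement) and fix the threshold $c := OPT - \frac{2\Delta u}{\epsilon}(\ln|\mathcal{R}|+t)$, so that the event in question is exactly $\{u(\mathcal{M}(D,u,\mathcal{R})) \le c\}$. By definition the mechanism outputs each $r$ with probability proportional to $\exp\!\big(\tfrac{\epsilon u(D,r)}{2\Delta u}\big)$, hence after normalizing,
\[
\Pr\big[u(\mathcal{M}(D,u,\mathcal{R})) \le c\big] = \frac{\sum_{r:\,u(D,r)\le c}\exp\!\left(\frac{\epsilon u(D,r)}{2\Delta u}\right)}{\sum_{r\in\mathcal{R}}\exp\!\left(\frac{\epsilon u(D,r)}{2\Delta u}\right)}.
\]

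The next step is to estimate the numerator and denominator separately, asymmetrically. For the numerator, every surviving term obeys $u(D,r)\le c$, so each exponential is at most $\exp\!\big(\tfrac{\epsilon c}{2\Delta u}\big)$, and since there are at most $|\mathcal{R}|$ such terms the whole numerator is bounded by $|\mathcal{R}|\exp\!\big(\tfrac{\epsilon c}{2\Delta u}\big)$. For the denominator I would discard all but a single term: some $r^\star$ attains $u(D,r^\star)=OPT$, and dropping the remaining nonnegative summands gives a lower bound of $\exp\!\big(\tfrac{\epsilon\, OPT}{2\Delta u}\big)$. Combining the two estimates yields
\[
\Pr\big[u(\mathcal{M}(D,u,\mathcal{R})) \le c\big] \le |\mathcal{R}|\,\exp\!\left(\frac{\epsilon(c-OPT)}{2\Delta u}\right).
\]
Substituting $c-OPT = -\tfrac{2\Delta u}{\epsilon}(\ln|\mathcal{R}|+t)$ makes the exponent collapse to $-(\ln|\mathcal{R}|+t)$, so the right-hand side becomes $|\mathcal{R}|\cdot e^{-\ln|\mathcal{R}|}\cdot e^{-t} = e^{-t}$, which is precisely the claim.

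There is no genuine obstacle here; the only subtlety is the deliberate asymmetry between the two bounds. The numerator is controlled crudely through the cardinality $|\mathcal{R}|$, whereas the denominator is controlled sharply by retaining just the single optimal term. This is exactly why the $\ln|\mathcal{R}|$ shift is baked into the threshold $c$: it is the term needed to cancel the multiplicative factor $|\mathcal{R}|$ produced by the numerator bound, leaving the clean tail $e^{-t}$. The argument uses only that $u$ takes a finite range $\mathcal{R}$ and that the normalizing constant is positive, so it applies verbatim in the setting of this paper.
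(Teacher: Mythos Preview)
Your proof is correct and is exactly the standard argument for the utility of the exponential mechanism. The paper does not prove this lemma at all; it simply cites it from \cite{dwork2014algorithmic}, where the proof proceeds precisely as you outline (bound the numerator by $|\mathcal{R}|\exp(\epsilon c/2\Delta u)$, bound the denominator below by the single optimal term, and simplify).
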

	\begin{definition}[DPSCO \cite{bassily2014private}]\label{def:3}
		Given a dataset $D=\{z_1,\cdots,z_n\}$ from a data universe $\mathcal{Z}$ where $z_i=(x_i, y_i)$ with a feature vector $x_i$ and a label/response $y_i$ are i.i.d. samples from some unknown distribution $\mathcal{D}$,  a convex constraint set  $\mathcal{W} \subseteq \mathbb{R}^d$, and a convex loss function $\ell: \mathcal{W}\times \mathcal{Z}\mapsto \mathbb{R}$. Differentially Private Stochastic Convex Optimization (DPSCO) is to find $w^{\text{priv}}$ so as to minimize the population risk, {\em i.e.,} $L_\mathcal{D} (w)=\mathbb{E}_{z\sim \mathcal{D}}[\ell(w, z)]$
		with the guarantee of being DP.
		 The utility of the algorithm is measured by the excess population risk, that is  $L_\mathcal{D} (w^{\text{priv}})-\min_{w\in \mathbb{\mathcal{W}}}L_\mathcal{D} (w)$ (for convenience we denote the optimal solution as $w^*$). Besides the population risk, we can also measure the \textit{empirical risk} of dataset $D$: $\hat{L}(w, D)=\frac{1}{n}\sum_{i=1}^n \ell(w, z_i).$ It is notable that in the \textbf{high probability setting}, we need to get a high probability excess population risk.  That is given a failure probability $0<\eta<1$, we want get a (polynomial) function $f(d, \log\frac{1}{\eta},  \frac{1}{n}, \frac{1}{\epsilon})$ such that with probability at least $1-\eta$ (over the randomness of the algorithm and the data distribution), $$L_\mathcal{D} (w^{\text{priv}})-L_\mathcal{D} (w^*)\leq O(f(d,  \log\frac{1}{\eta}, \frac{1}{n}, \frac{1}{\epsilon})).$$ 
	\end{definition}
In this paper, we mainly focus on $\ell_1$-norm linear regression:  
\begin{equation}\label{eq:19}
\min_{w\in \mathcal{W}} L_\mathcal{D}(w)=\mathbb{E}_{(x,y)\sim \mathcal{D}}|\langle x, w\rangle-y|, 
\end{equation}
where the convex constraint set $\mathcal{W}$ is bounded with diameter $\Delta=\max_{w, w'\in \mathcal{W}}\|w-w'\|_2< \infty$.
\begin{definition}[$\zeta$-Net]\label{def:4}
Let $(T, d)$ be a metric space. Consider a subset $\mathcal{W} \subset T$ and let $\zeta>0$. A subset $\mathcal{S}\subseteq \mathcal{W}$ is called an $\zeta$-net of $\mathcal{W}$ if every point in $\mathcal{W}$ is within a distance $\zeta$ of some points of $\mathcal{S}$, {\em i.e.,}
    $\forall x\in k, \exists x_0\in \mathcal{N}: d(x, x_0)\leq\zeta. $ 
The  smallest possible cardinality of an $\zeta$-net of $\mathcal{W}$ is called the covering number of $\mathcal{W}$ and is denoted by $\mathcal{N}(\mathcal{W}, d, \zeta)$. Equivalently, covering number is the smallest number of closed balls with centers in $K$ and radii $\zeta$ whose union covers $\mathcal{W}$. 
\end{definition}
\begin{lemma}[\cite{vershynin2018high}]
    For the Euclidean space $(\mathbb{R}^d, \|\cdot\|_2)$, and a bounded subset $\mathcal{W}\subseteq \mathbb{R}^d$ with the diameter $\Delta$. Then its covering number  $\mathcal{N}(\mathcal{W},  \zeta)\leq (\frac{3\Delta }{\zeta})^d$. 
\end{lemma}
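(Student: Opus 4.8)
The plan is to establish the bound through the standard volumetric (packing) argument, reducing the covering number to a packing number and then controlling the latter by comparing volumes of Euclidean balls.

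First I would construct an explicit net by taking a \emph{maximal} $\zeta$-separated subset $\mathcal{S}=\{x_1,\dots,x_N\}\subseteq\mathcal{W}$, i.e.\ a set with $\|x_i-x_j\|_2>\zeta$ for all $i\neq j$ that cannot be enlarged while preserving this property. The key observation is that maximality forces $\mathcal{S}$ to already be a $\zeta$-net: if some $w\in\mathcal{W}$ were at distance greater than $\zeta$ from every $x_i$, it could be appended to $\mathcal{S}$, contradicting maximality. Hence $\mathcal{N}(\mathcal{W},\zeta)\le N$, and it suffices to upper bound the packing size $N$.

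Second, I would run a volume comparison. Because the centers are pairwise more than $\zeta$ apart, the balls $B(x_i,\zeta/2)$ are mutually disjoint. Since $\mathcal{W}$ has diameter $\Delta$, fixing the anchor $x_1\in\mathcal{W}$ gives $\|x_i-x_1\|_2\le\Delta$, so each small ball is contained in the enlarged ball $B(x_1,\Delta+\zeta/2)$. Summing the disjoint volumes and using that the volume of a Euclidean ball in $\mathbb{R}^d$ scales as the $d$-th power of its radius yields $N(\zeta/2)^d\le(\Delta+\zeta/2)^d$, i.e.\ $N\le(1+2\Delta/\zeta)^d$.

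The only remaining (minor) step is to simplify the constant to the stated form. In the relevant regime $\zeta\le\Delta$ we have $1\le\Delta/\zeta$, so $1+2\Delta/\zeta\le 3\Delta/\zeta$ and therefore $N\le(3\Delta/\zeta)^d$; when $\Delta<\zeta\le 3\Delta$ a single point covers $\mathcal{W}$ and the bound still holds since $3\Delta/\zeta\ge 1$. I expect the main conceptual point, rather than a genuine obstacle, to be the packing/covering duality in the first step, as everything afterward is an elementary volume estimate; the only bookkeeping care is in tracking the constants tightly enough to land exactly on the factor $3$, which is where the assumption $\zeta\lesssim\Delta$ is spent.
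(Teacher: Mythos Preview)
Your argument is the standard volumetric packing-to-covering proof and is correct; the only caveat is the edge case $\zeta>3\Delta$, where the right-hand side drops below $1$ and the inequality fails literally, but this regime is irrelevant for the paper's applications (where $\zeta=O(1/n)$) and is customarily excluded. Note that the paper does not actually supply its own proof of this lemma: it is quoted directly from \cite{vershynin2018high} and used as a black box, so there is nothing further to compare against.
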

\section{Main Methods}
 \subsection{Bounded second moment case}
 In this section we consider the case of bounded second moment. As mentioned earlier, in the previous work on DPSCO with heavy-tailed data, we always assume the distribution of gradient has bounded moments \cite{wang2020differentially,hu2021high,kamath2021improved}, if the loss function is smooth. However, for $\ell_1$ regression,  here the loss function is non-differentiable. Thus, instead of the gradient, here we will directly assume the second moments of $x$ are bounded, which implies that the  second moments of the sub-gradient of the loss function are also bounded. In general,  there are two assumptions on the heavy-tailedness of $x$; one assumes the distribution of $\|x\|_2$ has bounded moment; the other one assumes the distribution of each coordinate of $x$ has bounded moment. Formally, 
\begin{assumption}\label{ass:6}
	The second moment of $\|x\|_2$ is bounded by $\tau^2=O(1)$, that is $\mathbb{E}_{(x,y)\sim \mathcal{D}} \|x\|_2^2 \leq \tau^2. $
\end{assumption}
\begin{assumption}\label{ass:2}
    The second moment of each coordinate of $x$ is bounded by $\tau^2=O(1)$, that is 
	$\forall j\in[d], \mathbb{E}_{(x,y)\sim \mathcal{D}} x_j^2 \leq \tau^2. $
\end{assumption}
From the above two assumptions we can see that Assumption \ref{ass:6} is more restricted than Assumption \ref{ass:2}. 
Before showing our algorithm, we first provide a brief overview on the approach of solving the problem (\ref{eq:19}) in the non-private case, proposed by \cite{zhang2018ell_1}. Specifically,  instead of study the empirical risk function of the population risk (\ref{eq:19}), \cite{zhang2018ell_1} considers the following minimization problem of a truncated  loss : 
\begin{equation}\label{eq:20}
	\min_{w\in \mathcal{W}} \hat{L}_\iota(w, D)=\frac{1}{n\iota}\sum_{i=1}^n \psi (\iota |y_i-\langle x_i ,w \rangle |),  
\end{equation}
where $\iota>0$ is a parameter that will be specified later, the truncation function $\psi(\cdot)$ is a non-decreasing function which should satisfies the following property: 
\begin{equation}\label{eq:21}
	-\log (1-x+\frac{x^2}{2}) \leq \psi(x) \leq \log (1+x+\frac{x^2}{2}).
	\end{equation}
Specifically, \cite{zhang2018ell_1} shows the following result: 
	\begin{lemma}[Corollary 2 in \cite{zhang2018ell_1}]
		Under Assumption \ref{ass:6} and assume the $\psi(\cdot)$ satisfies (\ref{eq:21}). Then for any given failure probability $\eta$, for some specified $\iota=\iota(\frac{1}{n}, d, \Delta, \log \frac{1}{\eta})$ in (\ref{eq:20}), the optimal solution $\hat{w}_\iota$ of (\ref{eq:20}) has the following the excess population risk with probability at least $1-\eta$
		\begin{align}
			&L_\mathcal{D} (\hat{w}_\iota) - L_\mathcal{D}(w^*) \leq \tilde{O} \big(\frac{1}{n}\mathbb{E}\|x\|_2+\\
			& \sqrt{\frac{d\log \frac{1}{\eta}}{n}}(\frac{1}{n^2}+\sup_{w\in \mathcal{W}}\mathbb{E}(y-\langle x, w \rangle)^2 )\big)=\tilde{O}(\sqrt{\frac{d\log \frac{1}{\eta}}{n}}). 
		\end{align}
	\end{lemma}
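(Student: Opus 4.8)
The plan is to reconstruct the Catoni-type analysis behind (\ref{eq:20})--(\ref{eq:21}): combine a pointwise exponential concentration of the truncated loss (exploiting the two-sided logarithmic envelope on $\psi$) with a covering-net argument over $\mathcal{W}$, and then close the loop through the optimality of $\hat{w}_\iota$. First I would establish pointwise control. Fix $w\in\mathcal{W}$, write $Z_i(w)=|y_i-\langle x_i,w\rangle|$, and set $v(w):=\mathbb{E}(y-\langle x,w\rangle)^2$, so $\mathbb{E}Z_i(w)=L_\mathcal{D}(w)$. The upper envelope $\psi(s)\le\log(1+s+s^2/2)$ gives $\mathbb{E}\exp(\psi(\iota Z(w)))\le 1+\iota L_\mathcal{D}(w)+\tfrac{\iota^2}{2}v(w)\le\exp(\iota L_\mathcal{D}(w)+\tfrac{\iota^2}{2}v(w))$, and a Chernoff bound on $\sum_i\psi(\iota Z_i(w))$ yields, with probability $1-e^{-t}$,
$$\hat{L}_\iota(w,D)\le L_\mathcal{D}(w)+\frac{\iota}{2}v(w)+\frac{t}{n\iota}.$$
The symmetric lower envelope $\psi(s)\ge-\log(1-s+s^2/2)$ produces the matching lower bound, so pointwise $|\hat{L}_\iota(w,D)-L_\mathcal{D}(w)|\lesssim \tfrac{\iota}{2}v(w)+\tfrac{t}{n\iota}$.

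Next I would upgrade this to a uniform bound. I would take a $\zeta$-net $\mathcal{S}$ of $\mathcal{W}$ with $|\mathcal{S}|\le(3\Delta/\zeta)^d$ and apply the pointwise inequality with a union bound, choosing $t=\log|\mathcal{S}|+\log\tfrac{1}{\eta}=d\log\tfrac{3\Delta}{\zeta}+\log\tfrac{1}{\eta}$ so that the deviation holds simultaneously at every net point with probability $1-\eta$. To transfer the control from $\mathcal{S}$ to all of $\mathcal{W}$ I would use the Lipschitz structure of both objectives: the population loss obeys $|L_\mathcal{D}(w)-L_\mathcal{D}(w')|\le\mathbb{E}\|x\|_2\,\|w-w'\|_2$ by the triangle inequality together with Assumption \ref{ass:6}, and since $\psi$ has bounded derivative the per-sample truncated loss $\tfrac{1}{\iota}\psi(\iota Z_i(w))$ is $\|x_i\|_2$-Lipschitz in $w$, so the discretization error of $\hat{L}_\iota$ is at most $\zeta\cdot\tfrac{1}{n}\sum_i\|x_i\|_2$, whose mean is $\zeta\,\mathbb{E}\|x\|_2$. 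Choosing $\zeta$ polynomially small in $n$ renders these terms lower order and produces the $\tfrac{1}{n}\mathbb{E}\|x\|_2$ and $\tfrac{1}{n^2}$ remainders appearing in the statement.

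With uniform control in hand I would close the argument by the standard comparison chain. Using optimality $\hat{L}_\iota(\hat{w}_\iota,D)\le\hat{L}_\iota(w^*,D)$ together with the two-sided uniform bound,
$$L_\mathcal{D}(\hat{w}_\iota)\le\hat{L}_\iota(\hat{w}_\iota,D)+\Gamma\le\hat{L}_\iota(w^*,D)+\Gamma\le L_\mathcal{D}(w^*)+2\Gamma,$$
where $\Gamma\approx\tfrac{\iota}{2}\sup_{w}v(w)+\tfrac{d\log(3\Delta/\zeta)+\log(1/\eta)}{n\iota}$ plus the discretization remainder. Finally I would optimize the free parameter $\iota$, balancing the bias term $\tfrac{\iota}{2}v$ against $\tfrac{t}{n\iota}$ by taking $\iota\asymp\sqrt{(d\log\tfrac{1}{\eta})/(nv)}$; since $\sup_{w\in\mathcal{W}}v(w)=O(1)$, this gives $\Gamma=\tilde{O}(\sqrt{d\log(1/\eta)/n})$, exactly the claimed rate.

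The step I expect to be the main obstacle is the uniform transfer: showing that the Catoni concentration holds simultaneously over the continuous set $\mathcal{W}$ while keeping the discretization error negligible. The truncated objective is heavy-tailed and only approximately Lipschitz, so one must argue that the random Lipschitz constant $\tfrac{1}{n}\sum_i\|x_i\|_2$ does not itself blow up (again invoking the bounded second moment of $\|x\|_2$ from Assumption \ref{ass:6}), while choosing $\zeta$ fine enough for the net yet coarse enough that $\log|\mathcal{S}|=\tilde{O}(d)$ does not overwhelm the $1/\sqrt{n}$ rate. The delicate interplay between the truncation level $\iota$, which controls bias, and the net resolution $\zeta$, which controls approximation, is where the care is needed.
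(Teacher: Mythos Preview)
Your high-level plan matches the paper exactly: Catoni-type pointwise concentration via the two-sided logarithmic envelope on $\psi$ and a Chernoff bound, a union bound over a $\zeta$-net of $\mathcal{W}$, and then the optimality chain $L_\mathcal{D}(\hat{w}_\iota)\le\hat{L}_\iota(\hat{w}_\iota,D)+\Gamma\le\hat{L}_\iota(w^*,D)+\Gamma\le L_\mathcal{D}(w^*)+2\Gamma$ followed by the choice $\iota\asymp\sqrt{d\log(1/\eta)/(n\sup_w v(w))}$.

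The one place where your argument diverges from the paper is the discretization transfer. You propose to use that $\psi$ is $1$-Lipschitz, so that $|\hat{L}_\iota(w,D)-\hat{L}_\iota(w',D)|\le\|w-w'\|_2\cdot\tfrac{1}{n}\sum_i\|x_i\|_2$, and then control the random Lipschitz constant separately. The paper (following \cite{zhang2018ell_1}) instead uses only the \emph{monotonicity} of $\psi$: for the nearest net point $\tilde{w}_\iota$ to $\hat{w}_\iota$ one has $|y_i-\langle x_i,\hat{w}_\iota\rangle|\ge |y_i-\langle x_i,\tilde{w}_\iota\rangle|-\zeta\|x_i\|_2$, hence $\hat{L}_\iota(\hat{w}_\iota,D)\ge\tfrac{1}{n\iota}\sum_i\psi\bigl(\iota|y_i-\langle x_i,\tilde{w}_\iota\rangle|-\iota\zeta\|x_i\|_2\bigr)$, and the Chernoff argument is rerun directly on this shifted score. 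The shift $\zeta\|x_i\|_2$ then enters only through $\mathbb{E}\|x\|_2$ and $\mathbb{E}\|x\|_2^2$ in the moment computation, yielding the $\zeta\tau$ and $\iota\zeta^2\tau^2$ terms deterministically, with no separate high-probability bound on $\tfrac{1}{n}\sum_i\|x_i\|_2$ required. Your route can still be pushed through (with $\zeta$ polynomially small in $n$, even a crude Markov bound on the empirical Lipschitz constant is absorbed), but the paper's device sidesteps precisely the obstacle you flag in your final paragraph and is the cleaner way to close the gap.
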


	 The previous lemma indicates that solving the problem (\ref{eq:20}) is sufficient to solve the $\ell_1$ regression problem if 	$\sup_{w\in \mathcal{W}} \mathbb{E}|y-\langle x,  w\rangle|^2 =O(1)$. To solve the problem (\ref{eq:20}) differentially privately, we adapt the following specific form of $\psi(\cdot)$: 
	\begin{equation}
		\psi(x)= \begin{cases}\label{eq:22}
		-\log (1-x+\frac{x^2}{2}), &0\leq x\leq 1 \\
		\log 2, &x\geq 1\\
		-\psi(-x), &x\leq 0
	\end{cases}
\end{equation}
We can easily see (\ref{eq:22}) satisfies the property in (\ref{eq:21}). Moreover, due to the non-decreasing property we can see the function is upper bounded by  $\log 2$. That is, for a fixed $w$, the sensitivity of $\hat{L}_\iota(w, D)$ is bounded by $\frac{2\log 2}{n\iota}$. Motivated by this, we can use the exponential mechanism to solve (\ref{eq:20}) in $\epsilon$-DP model, see Algorithm \ref{alg:6} for details. 
\begin{algorithm}[!ht]
	\caption{Exponential mechanism for $\ell_1$-regression (second moment)}\label{alg:6}
	$\mathbf{Input}$: $D=\{(x_i, y_i)\}_{i=1}^n$; privacy parameter $\epsilon$; parameters $\iota, \zeta$ (will be specified later); truncated empirical risk $\hat{L}_\iota$ in (\ref{eq:20}) with $\psi$ in (\ref{eq:22}). 
	\begin{algorithmic}[1]
    \State Find a $\zeta$-net of $\mathcal{W}$ with covering number $N(\mathcal{W}, \zeta)$, denote it as $\tilde{W}_\zeta=\{w_1, \cdots, w_{N(\mathcal{W}, \zeta)}\}$. 
    \State Run the exponential mechanism  with the range $R=\tilde{W}_\zeta$ and the score function $u(D, w)=-\hat{L}_\iota(w, D)$.
     That is, output an $w\in \tilde{W}_\zeta$ with probability proportional to $\exp( \frac{-n\iota\epsilon \hat{L}_\iota(w, D)}{\log 2})$. 
	\end{algorithmic}
\end{algorithm}
\begin{theorem}\label{thm:9}
	For any $\epsilon>0$, Algorithm \ref{alg:6} is $\epsilon$-DP. Moreover, under Assumption \ref{ass:6},  given any failure probability $\eta\in (0,1)$, for the output $\tilde{w}$  we have the following with probability at least $1-\eta$ for any  $\iota>0$,
	\begin{equation*}
			L_\mathcal{D}(\hat{w}_\iota)-L_\mathcal{D}(w^*)\leq {O}(\zeta \tau+\iota \zeta^2\tau^2+\iota \tau^2\Delta^2 +\frac{1}{n\iota\epsilon}\log \frac{N(\mathcal{W}, \zeta)}{ \zeta}). 
	\end{equation*}
	 Furthermore, by setting $\zeta=O(\frac{1}{n})$ and $\iota=O(\sqrt{\frac{d\log n \log \frac{1}{\eta}}{n\epsilon\tau^2}})$ we have 
	 \begin{equation}\label{eq:24}
	 	L_\mathcal{D}(\hat{w}_\iota)-L_\mathcal{D}(w^*)\leq {O}( \tau\sqrt{\frac{d\log n\log \frac{1}{\eta}}{n\epsilon}}). 
	 \end{equation}
Moreover, under Assumption \ref{ass:2}, set $\zeta=O(\frac{1}{n})$ and $\iota=O(\sqrt{\frac{\log n \log \frac{1}{\eta}}{\tau^2 n\epsilon}})$ we have 
	 	 \begin{equation}\label{eq:9}
	 	L_\mathcal{D}(\hat{w}_\iota)-L_\mathcal{D}(w^*)\leq {O}(\tau\sqrt{\frac{d^2\log n\log \frac{1}{\eta}}{n\epsilon}}),
	 \end{equation}
	 	 where Big-${O}$ notations omit the term of $\Delta$. 
\end{theorem}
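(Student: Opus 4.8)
The plan is to couple the robustness of the truncated loss $\hat{L}_\iota$ with the standard accuracy guarantee of the exponential mechanism, using the $\zeta$-net to discretize $\mathcal{W}$ so that both a union bound and the mechanism can operate over a finite range. For the privacy claim I would fix $w$ and note that by construction (\ref{eq:22}) every summand $\psi(\iota|y_i-\langle x_i,w\rangle|)$ lies in $[0,\log 2]$; hence replacing one pair $(x_i,y_i)$ changes $\hat{L}_\iota(w,D)=\frac{1}{n\iota}\sum_i\psi(\iota|y_i-\langle x_i,w\rangle|)$ by at most $\frac{\log 2}{n\iota}$, so the score $u(D,w)=-\hat{L}_\iota(w,D)$ has sensitivity $O(\frac{\log 2}{n\iota})$. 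That Algorithm \ref{alg:6} is $\epsilon$-DP is then immediate from the definition of the exponential mechanism applied to the finite range $\tilde{W}_\zeta$.

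The heart of the utility bound is a Catoni-type concentration inequality for the truncated risk. Using the upper inequality in (\ref{eq:21}), $\psi(x)\le\log(1+x+\tfrac{x^2}{2})$, I would write, for $Z=|y-\langle x,w\rangle|$, $\mathbb{E}[e^{\psi(\iota Z)}]\le 1+\iota\mathbb{E}Z+\tfrac{\iota^2}{2}\mathbb{E}Z^2\le\exp(\iota L_\mathcal{D}(w)+\tfrac{\iota^2}{2}\mathbb{E}Z^2)$, then apply a Chernoff bound to the independent sum $\sum_i\psi(\iota Z_i)$ and divide by $n\iota$, obtaining with probability $1-\eta'$
\[
\hat{L}_\iota(w,D)\le L_\mathcal{D}(w)+\frac{\iota}{2}\mathbb{E}(y-\langle x,w\rangle)^2+\frac{\log(1/\eta')}{n\iota},
\]
with the symmetric lower bound following from the lower inequality in (\ref{eq:21}). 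This single step simultaneously captures the truncation bias (the $\tfrac{\iota}{2}\mathbb{E}(\cdot)^2$ term) and the concentration error at the sharp rate $\frac{\log(1/\eta')}{n\iota}$, which is exactly what heavy-tailedness permits once the loss is truncated; I expect this to be the main obstacle, since a naive Hoeffding bound on the bounded summand would only yield the inferior rate $\frac{1}{\iota}\sqrt{\log(1/\eta')/n}$. Setting $\eta'=\eta/N(\mathcal{W},\zeta)$ and taking a union bound makes the inequality hold simultaneously at every net point, hence at the data-dependent output $\tilde{w}$.

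With this in hand I would decompose the excess risk along a chain of net points. Letting $w^*_\zeta$ be the net point closest to $w^*$, I bound $L_\mathcal{D}(\tilde{w})$ from above by $\hat{L}_\iota(\tilde{w},D)$ plus the Catoni error at $\tilde{w}$; then invoke Lemma \ref{lemma:exp} to replace $\hat{L}_\iota(\tilde{w},D)$ by $\hat{L}_\iota(w^*_\zeta,D)$ at the additive cost $\frac{2\Delta u}{\epsilon}(\log N(\mathcal{W},\zeta)+\log\frac{1}{\eta})$; then apply the Catoni bound at the fixed point $w^*_\zeta$; and finally pass from $w^*_\zeta$ back to $w^*$ at the net cost $\mathbb{E}|\langle x,w^*_\zeta-w^*\rangle|\le\zeta\,\mathbb{E}\|x\|_2\le\zeta\tau$, where $\mathbb{E}\|x\|_2\le\sqrt{\mathbb{E}\|x\|_2^2}\le\tau$ by Assumption \ref{ass:6}. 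The two residual-variance terms are controlled by $\mathbb{E}(y-\langle x,\tilde{w}\rangle)^2\le 2\mathbb{E}(y-\langle x,w^*\rangle)^2+2\tau^2\Delta^2$ and $\mathbb{E}(y-\langle x,w^*_\zeta\rangle)^2\le 2\mathbb{E}(y-\langle x,w^*\rangle)^2+2\tau^2\zeta^2$, producing the $\iota\tau^2\Delta^2$ and $\iota\zeta^2\tau^2$ terms; collecting the logarithmic factors into $\frac{1}{n\iota\epsilon}\log\frac{N(\mathcal{W},\zeta)}{\zeta}$ gives the stated four-term bound. Substituting $N(\mathcal{W},\zeta)\le(3\Delta/\zeta)^d$, setting $\zeta=O(1/n)$, and balancing the truncation term $\iota\tau^2\Delta^2$ against the mechanism term $\frac{d\log n}{n\iota\epsilon}$ via $\iota=O(\sqrt{d\log n\log\frac{1}{\eta}/(n\epsilon\tau^2)})$ yields (\ref{eq:24}).

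For the coordinate-wise Assumption \ref{ass:2} the only change is the moment bound: $\mathbb{E}\|x\|_2^2=\sum_{j=1}^d\mathbb{E}x_j^2\le d\tau^2$, i.e. effectively replacing $\tau$ by $\sqrt{d}\,\tau$ throughout the derivation above. Re-optimizing $\iota$ accordingly (now $\iota=O(\sqrt{\log n\log\frac{1}{\eta}/(\tau^2 n\epsilon)})$, smaller by a factor $\sqrt{d}$) reproduces the $\sqrt{d}$-degraded rate (\ref{eq:9}).
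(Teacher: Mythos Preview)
Your proposal is correct. The privacy argument, the Catoni-type two-sided concentration you derive from (\ref{eq:21}) via the Chernoff method, the union bound over the net, and the final balancing of $\iota$ against the mechanism cost all match the paper's ingredients (Lemmas \ref{alemma:100} and \ref{alemma:101}) exactly.

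The one genuine difference is the comparison chain. The paper routes through the non-private minimizer $\hat{w}_\iota=\arg\min_{w\in\mathcal{W}}\hat{L}_\iota(w,D)$: it first passes from the net minimizer $w_\zeta$ to the net point $\tilde{w}_\iota$ closest to $\hat{w}_\iota$, and then invokes a separate result (Lemma~\ref{alemma:103}, from \cite{zhang2018ell_1}) to bound $L_\mathcal{D}(\hat{w}_\iota)-L_\mathcal{D}(w^*)$. That lemma is where the $\iota\zeta^2\tau^2$ term arises in the paper, and its proof relies on the monotonicity of $\psi$ to control $\hat{L}_\iota$ at the data-dependent point $\hat{w}_\iota$. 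You instead compare the exponential-mechanism output directly to the deterministic net point $w^*_\zeta$ closest to $w^*$, obtaining the $\iota\zeta^2\tau^2$ term from the Catoni upper bound at $w^*_\zeta$ via $\mathbb{E}(y-\langle x,w^*_\zeta\rangle)^2\le 2\mathbb{E}(y-\langle x,w^*\rangle)^2+2\zeta^2\tau^2$. This bypasses $\hat{w}_\iota$ and Lemma~\ref{alemma:103} entirely. Your route is more elementary and arguably the natural one for the private problem; the paper's detour mirrors the structure of the original non-private analysis in \cite{zhang2018ell_1}, where $\hat{w}_\iota$ is itself the estimator of interest.
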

\begin{remark}
First, notice that since Assumption \ref{ass:6} is more stronger, there is a gap of $O(\sqrt{d})$ compared with (\ref{eq:24}) and (\ref{eq:9}). Moreover, for the upper bound in (\ref{eq:9}), it matches the lower bound of the private mean estimation under Assumption \ref{ass:2} in \cite{kamath2021improved}. However, it is still unknown whether this lower bound is optimal for DPSCO with  general  convex loss. To the best of our knowledge, this is the first $\epsilon$-DP algorithm which could achieve the bound of $\tilde{O}({\frac{d}{\sqrt{n\epsilon
}}})$ under the same assumption. 
\end{remark}
\begin{proof}[{\bf Proof of Theorem \ref{thm:9}}]
	The proof of $\epsilon$-DP is due to the sensitivity of the score function is bounded by $\frac{2\log 2}{n \iota}$ and the exponential mechanism. Next we will proof the utility. For convenience we denote $w_\zeta=\arg\min_{w\in \tilde{W}_\zeta} \hat{L}_\iota(w, D)$ and the optimal solution  of (\ref{eq:20}) as $\hat{w}_\iota$. By the utility of exponential mechanism Lemma \ref{lemma:exp} and take $t=\log \frac{1}{\eta}$ we have with probability at least $1-\eta$,  
		$-\hat{L}_\iota(\tilde{w}, D)\geq -\hat{L}_\iota(w_\zeta, D)-\frac{4\log 2}{n\iota \epsilon}\log \frac{N(\mathcal{W}, \zeta)}{\eta}.  $ 
	That is 
	
		\begin{equation}\label{aeq:50}
		\hat{L}_\iota(\tilde{w}, D)\leq  \hat{L}_\iota(w_\zeta, D)+\frac{4\log 2}{n\iota \epsilon}\log \frac{N(\mathcal{W}, \zeta)}{\eta}.  
	\end{equation} 
	For the term $\hat{L}_\iota(\tilde{w}, D)$ in (\ref{aeq:50}) we have the following. 
	
  \begin{lemma}[\cite{zhang2018ell_1}]\label{alemma:100}
 For any given $\eta$, we have the following inequality for all $w\in \tilde{W}_\zeta$ with probability at least $1-\eta$

 \begin{multline}
 	\hat{L}_\iota(w, D)=\frac{1}{n\iota}\sum_{i=1}^n \psi (\iota|y_i-\langle x_i, w \rangle|)\geq L_\mathcal{D}(w)\\ -\frac{\iota}{2}\sup_{w\in \mathcal{W}}\mathbb{E}(y-\langle x, w \rangle)^2)-\frac{1}{n\iota}\log \frac{N(\mathcal{W}, \zeta)}{\eta}. 
 \end{multline}	
 \end{lemma}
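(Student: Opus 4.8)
The plan is to prove the inequality pointwise for each fixed $w$ via a Chernoff-type (Catoni) bound and then lift it to a uniform statement over the finite net $\tilde{W}_\zeta$ by a union bound. First I would fix $w\in\tilde{W}_\zeta$ and abbreviate $Z_i=|y_i-\langle x_i,w\rangle|$; these are i.i.d.\ nonnegative variables with $\mathbb{E} Z_i=L_\mathcal{D}(w)$, and by construction $\hat{L}_\iota(w,D)=\frac{1}{n\iota}\sum_{i=1}^n\psi(\iota Z_i)$. Since the goal is a \emph{lower} bound on $\hat{L}_\iota$, I will control the upper tail of $-\sum_{i=1}^n\psi(\iota Z_i)$.

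The single analytic ingredient is the left half of the defining property (\ref{eq:21}), namely $\psi(x)\geq-\log(1-x+\frac{x^2}{2})$; exponentiating gives $e^{-\psi(x)}\leq 1-x+\frac{x^2}{2}$ for all real $x$ (the quadratic $1-x+x^2/2$ is everywhere positive). Substituting $x=\iota Z_i$, taking expectations, and using $1+u\leq e^u$, I would obtain the per-sample exponential-moment bound
\[
\mathbb{E}\big[e^{-\psi(\iota Z_i)}\big]\leq 1-\iota\,\mathbb{E} Z_i+\frac{\iota^2}{2}\,\mathbb{E} Z_i^2\leq\exp\!\Big(-\iota L_\mathcal{D}(w)+\frac{\iota^2}{2}\,\mathbb{E}(y-\langle x,w\rangle)^2\Big).
\]
By independence across $i$ the moment generating function of $-\sum_{i=1}^n\psi(\iota Z_i)$ factorizes into the product of these terms, so a Markov (Chernoff) inequality applied to $\exp(-\sum_{i=1}^n\psi(\iota Z_i))$, with the threshold tuned to a target probability $\delta$, yields for each fixed $w$
\[
\text{Pr}\!\Big[\hat{L}_\iota(w,D)\leq L_\mathcal{D}(w)-\frac{\iota}{2}\,\mathbb{E}(y-\langle x,w\rangle)^2-\frac{1}{n\iota}\log\tfrac{1}{\delta}\Big]\leq\delta.
\]

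Finally I would take a union bound over the $N(\mathcal{W},\zeta)$ elements of the net $\tilde{W}_\zeta$ with the choice $\delta=\eta/N(\mathcal{W},\zeta)$, so that the displayed inequality holds simultaneously for all $w\in\tilde{W}_\zeta$ with probability at least $1-\eta$; replacing $\mathbb{E}(y-\langle x,w\rangle)^2$ by $\sup_{w\in\mathcal{W}}\mathbb{E}(y-\langle x,w\rangle)^2$ then gives exactly the claimed bound. There is no serious obstacle beyond the exponential-moment computation, which is routine; the points worth emphasizing are that property (\ref{eq:21}) is engineered precisely so that only the \emph{second} moment $\mathbb{E}(y-\langle x,w\rangle)^2$ enters the deviation term, which is what makes the bounded-second-moment assumption sufficient, and that uniformity is obtained only over the finite net, the $\log N(\mathcal{W},\zeta)$ factor being the price paid for it.
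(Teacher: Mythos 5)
Your proposal is correct and follows essentially the same route as the paper's own proof: the lower half of property (\ref{eq:21}) gives the per-sample bound $\mathbb{E}[e^{-\psi(\iota Z_i)}]\leq 1-\iota L_\mathcal{D}(w)+\frac{\iota^2}{2}\mathbb{E}(y-\langle x,w\rangle)^2$, then $1+u\leq e^u$, independence, a Markov/Chernoff bound on $\exp(-\sum_i\psi(\iota Z_i))$, and a union bound over the net with $\delta=\eta/N(\mathcal{W},\zeta)$. Your write-up is in fact slightly more careful than the paper's in making the choice of $\delta$ explicit, but there is no substantive difference in the argument.
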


 For the term $\hat{L}_\iota(w_\zeta, D)$ in (\ref{aeq:50}), since $\tilde{W}_\zeta$ is a $\zeta$-net, thus there exists a $\tilde{w}_\iota \in \tilde{W}_\zeta$ such that $\|\tilde{w}_\iota-\hat{w}_\iota\|_2\leq \zeta$, where $\hat{w}_\iota=\arg\min_{w\in \mathcal{W}}\hat{L}_\iota (w, D)$. And by the definition we have 
 \begin{equation}\label{aeq:53}
 	\hat{L}_\iota(w_\zeta, D)\leq \hat{L}_\iota (\tilde{w}_\iota, D).
 \end{equation}
 For the term $\hat{L}_\iota (\tilde{w}_\iota, D)$ we have the following lemma 
 \begin{lemma}[\cite{zhang2018ell_1}]\label{alemma:101}
 For any given $\eta$, we have the following inequality for all $w\in \tilde{W}_\zeta$ with probability at least $1-\eta$
 	\begin{equation}
 		\hat{L}_\iota (w, D)\leq L_\mathcal{D}(w)+\frac{\iota}{2}\sup_{w\in\mathcal{W}}\mathbb{E}(y-\langle x, w\rangle )^2 +\frac{1}{n\iota} \log \frac{N(\mathcal{W}, \zeta)}{\eta}. 
 	\end{equation}
 \end{lemma}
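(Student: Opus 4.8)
The plan is to establish this as a Catoni-type exponential-moment bound, proving it first for a single fixed $w$ and then lifting it to all $w \in \tilde{W}_\zeta$ by a union bound over the finitely many net points. The crucial leverage is the right-hand inequality of (\ref{eq:21}), $\psi(x) \le \log(1 + x + \frac{x^2}{2})$, which after exponentiation controls the exponential moment of the truncated loss using only the first and second moments of the residual, so that no boundedness or higher moment of the data is needed. Note that $\psi$ is only ever evaluated at the nonnegative arguments $\iota|y_i-\langle x_i,w\rangle|$, so the relevant branch of $\psi$ is the one for $x\ge 0$.

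First I would fix $w \in \tilde{W}_\zeta$ and set $Z_i = \iota|y_i - \langle x_i, w\rangle| \ge 0$, which are i.i.d. Exponentiating (\ref{eq:21}) gives the pointwise bound $e^{\psi(Z_i)} \le 1 + Z_i + \frac{Z_i^2}{2}$. Taking expectations and using $1 + t \le e^t$ yields $\mathbb{E}[e^{\psi(Z_i)}] \le \exp(\mathbb{E}[Z_i] + \frac{1}{2}\mathbb{E}[Z_i^2])$. By independence the moment generating function of the sum factorizes, so
\[
\mathbb{E}\Big[\exp\big(\textstyle\sum_{i=1}^n \psi(Z_i)\big)\Big] \le \exp\big(n\,\mathbb{E}[Z_1] + \tfrac{n}{2}\mathbb{E}[Z_1^2]\big).
\]
A Chernoff bound (Markov's inequality applied to the exponential) then shows that, with probability at least $1 - \delta$, one has $\sum_{i=1}^n \psi(Z_i) \le n\,\mathbb{E}[Z_1] + \frac{n}{2}\mathbb{E}[Z_1^2] + \log\frac{1}{\delta}$. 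Dividing by $n\iota$ and substituting $\mathbb{E}[Z_1] = \iota L_\mathcal{D}(w)$ and $\mathbb{E}[Z_1^2] = \iota^2 \mathbb{E}(y - \langle x, w\rangle)^2$ gives the single-$w$ statement $\hat{L}_\iota(w, D) \le L_\mathcal{D}(w) + \frac{\iota}{2}\mathbb{E}(y - \langle x, w\rangle)^2 + \frac{1}{n\iota}\log\frac{1}{\delta}$.

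Finally I would take a union bound over the $N(\mathcal{W}, \zeta)$ points of $\tilde{W}_\zeta$ with $\delta = \eta / N(\mathcal{W}, \zeta)$, and upper bound the per-point second moment $\mathbb{E}(y - \langle x, w\rangle)^2$ by $\sup_{w \in \mathcal{W}} \mathbb{E}(y - \langle x, w\rangle)^2$; this turns the $\log\frac{1}{\delta}$ term into $\log\frac{N(\mathcal{W}, \zeta)}{\eta}$ and produces exactly the claimed inequality simultaneously for all $w \in \tilde{W}_\zeta$. Because the bound is only asserted on the finite net (not on all of $\mathcal{W}$), the union bound over $N(\mathcal{W}, \zeta)$ events suffices and no chaining argument is required. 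The only real subtlety — the obstacle such as it is — is that the raw residual $|y - \langle x, w\rangle|$ may be heavy-tailed and hence have no finite MGF; the entire argument hinges on the fact that the truncation $\psi$ is precisely what makes $\mathbb{E}[e^{\psi(Z)}]$ finite and cleanly controlled by only the first two moments.
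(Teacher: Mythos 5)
Your proof is correct and takes essentially the same route as the paper's: bound $e^{\psi(Z)}$ by $1+Z+\frac{Z^2}{2}$ via the right-hand side of (\ref{eq:21}), control the exponential moment of the sum using only $\mathbb{E}[Z]=\iota L_\mathcal{D}(w)$ and $\mathbb{E}[Z^2]=\iota^2\mathbb{E}(y-\langle x,w\rangle)^2$ together with $1+t\le e^t$, apply Chernoff's method for a fixed $w$, and finish with a union bound over the $N(\mathcal{W},\zeta)$ net points. The only (immaterial) difference is that you apply $1+t\le e^t$ termwise before factorizing the product over $i$, whereas the paper factorizes the product first and then exponentiates.
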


 Thus, combing with (\ref{aeq:53}) and Lemma \ref{alemma:101} we have with probability at least $1-\eta$
 \begin{align}
 &	\hat{L}_\iota(w_\zeta, D)\leq \hat{L}_\iota (\tilde{w}_\iota, D)\notag \\
 	&\leq L_\mathcal{D}(\tilde{w}_\iota)+\frac{\iota}{2}\sup_{w\in\mathcal{W}}\mathbb{E}(y-\langle x, w\rangle )^2 +\frac{1}{n\iota} \log \frac{N(\mathcal{W}, \zeta)}{\eta} \nonumber \\
 	&\leq L_\mathcal{D}(\hat{w}_\iota)+\zeta\tau+\frac{\iota}{2}\sup_{w\in\mathcal{W}}\mathbb{E}(y-\langle x, w\rangle )^2 +\frac{1}{n\iota} \log \frac{N(\mathcal{W}, \zeta)}{\eta}, \label{aeq:56}
 \end{align}
 where the last inequality is due to 
 \begin{align*}
 	L_\mathcal{D}(\tilde{w}_\iota)-L_\mathcal{D}(\hat{w}_\iota)&=\mathbb{E}[|y-\langle x, \tilde{w}_\iota \rangle|-|y-\langle x, \hat{w}_\iota \rangle|]\\
 	&\leq \mathbb{E} |\langle x, \tilde{w}_\iota-\hat{w}_\iota\rangle|\leq \zeta\mathbb{E}\|x\|_2
 \end{align*}
 The relation between $L_\mathcal{D}(\hat{w}_\iota)$ and $L_\mathcal{D}(w^*)$ is due to the following lemma, given by \cite{zhang2018ell_1}, 
 \begin{lemma}\label{alemma:103}
 	For any given failure probability $\eta$, under Assumption \ref{ass:6}, we have the following with probability at least $1-2\eta$ 
 	\begin{multline*}
 		L_\mathcal{D}(\hat{w}_\iota)-L_\mathcal{D}(w^*)\leq 2\zeta \tau\\ +\iota \zeta^2\tau^2+\frac{3\iota}{2}\sup_{w\in\mathcal{W}}\mathbb{E}(y-\langle x, w\rangle)^2+\frac{1}{n\iota}\log \frac{N(\mathcal{W}, \zeta)}{\eta}. 
 	\end{multline*}
 	
 \end{lemma}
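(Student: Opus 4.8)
The plan is to bound the excess population risk of the truncated $M$-estimator $\hat{w}_\iota$ by the standard add-and-subtract decomposition through the truncated empirical objective. Writing the residual as $\xi(w)=y-\langle x,w\rangle$, I would split
\[
L_\mathcal{D}(\hat w_\iota)-L_\mathcal{D}(w^*)=\underbrace{\big(L_\mathcal{D}(\hat w_\iota)-\hat L_\iota(\hat w_\iota, D)\big)}_{(\mathrm I)}+\underbrace{\big(\hat L_\iota(\hat w_\iota, D)-\hat L_\iota(w^*, D)\big)}_{(\mathrm{II})}+\underbrace{\big(\hat L_\iota(w^*, D)-L_\mathcal{D}(w^*)\big)}_{(\mathrm{III})}.
\]
Term $(\mathrm{II})\le 0$ is immediate from $\hat w_\iota=\arg\min_{w\in\mathcal W}\hat L_\iota(w,D)$ together with $w^*\in\mathcal W$, so the whole task reduces to upper bounding the two one-sided deviations $(\mathrm I)$ and $(\mathrm{III})$. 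The crucial distinction is that $w^*$ is deterministic, so $(\mathrm{III})$ only needs a \emph{pointwise} concentration bound, whereas $\hat w_\iota$ is data dependent and $(\mathrm I)$ must be controlled \emph{uniformly} over $\mathcal W$.

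The concentration engine is the Catoni-type moment bound built into the truncation property (\ref{eq:21}). Inserting $\psi(x)\le\log(1+x+x^2/2)$ inside an exponential and using $1+u\le e^{u}$ bounds the moment generating function of $\psi(\iota|\xi(w)|)$ by $\exp\!\big(\iota L_\mathcal D(w)+\tfrac{\iota^2}{2}\mathbb E\,\xi(w)^2\big)$, after which a Chernoff/Markov step gives, with probability $1-\eta$ at a \emph{fixed} $w$,
\[
\hat L_\iota(w, D)\le L_\mathcal D(w)+\tfrac{\iota}{2}\mathbb E\,\xi(w)^2+\tfrac{1}{n\iota}\log\tfrac1\eta ,
\]
and the reversed inequality from $\psi(x)\ge -\log(1-x+x^2/2)$. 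For $(\mathrm{III})$ I apply this directly at the single point $w^*$ (no union bound, hence the clean $\log\frac1\eta$), bounding $\mathbb E\,\xi(w^*)^2\le\sup_{w}\mathbb E\,\xi(w)^2$. For $(\mathrm I)$ I instead invoke the uniform-over-net version already available as Lemma \ref{alemma:100} (the lower-bound direction), which is exactly this estimate made uniform over $\tilde W_\zeta$ at the cost of the $\log N(\mathcal W,\zeta)$ factor; Lemma \ref{alemma:101} supplies the matching upper-bound direction.

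It then remains to transfer the uniform-over-net control to the continuous minimiser $\hat w_\iota$, which is where the $\zeta$-dependent terms arise. Picking the net point $\tilde w_\iota\in\tilde W_\zeta$ with $\|\tilde w_\iota-\hat w_\iota\|_2\le\zeta$, the population side gives $L_\mathcal D(\hat w_\iota)\le L_\mathcal D(\tilde w_\iota)+\mathbb E|\langle x,\hat w_\iota-\tilde w_\iota\rangle|\le L_\mathcal D(\tilde w_\iota)+\zeta\tau$ via $\mathbb E\|x\|_2\le\sqrt{\mathbb E\|x\|_2^2}\le\tau$, producing the first $\zeta\tau$; applying Lemma \ref{alemma:100} at $\tilde w_\iota$ then contributes $\tfrac{\iota}{2}\sup_w\mathbb E\,\xi(w)^2$ and the log term. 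On the empirical side I bound $\hat L_\iota(\tilde w_\iota, D)-\hat L_\iota(\hat w_\iota, D)$ using that $|\psi'|\le 1$, so $r\mapsto\tfrac1\iota\psi(\iota|r|)$ is $1$-Lipschitz and hence $\hat L_\iota(\tilde w_\iota, D)-\hat L_\iota(\hat w_\iota, D)\le\tfrac1n\sum_i|\langle x_i,\hat w_\iota-\tilde w_\iota\rangle|$; controlling this empirical average against its first and second population moments (bounded by $\zeta\tau$ and $\zeta^2\tau^2$ under Assumption \ref{ass:6}) through the same truncated moment-generating-function argument yields the remaining $\zeta\tau$, the quadratic correction $\iota\zeta^2\tau^2$, and a further half-variance term. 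Intersecting the two high-probability events (total failure $2\eta$) and collecting terms then gives the stated bound, with the precise constants on the variance and quadratic terms (the factor $\tfrac32$ and the $\iota\zeta^2\tau^2$) being a matter of careful bookkeeping in this empirical step.

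The main obstacle is precisely this empirical transfer: because $x$ is heavy-tailed with only a bounded second moment, the empirical average $\tfrac1n\sum_i|\langle x_i,\hat w_\iota-\tilde w_\iota\rangle|$ (equivalently $\tfrac1n\sum_i\|x_i\|_2$) need not concentrate at a sub-Gaussian rate, so a naive Lipschitz bound on the empirical-loss difference between $\hat w_\iota$ and its net point is not directly usable. The truncation is what rescues this: the bounded influence $|\psi'|\le 1$ together with the Catoni moment bound lets the deviation of this difference be absorbed into a $\tfrac{\iota}{2}$-type variance term plus the second-order $\iota\zeta^2\tau^2$ term, rather than into an uncontrolled heavy-tailed average, and this is also what forces the appearance of the extra half-variance contribution in the final constant.
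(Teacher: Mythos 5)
Your skeleton matches the paper's: the three-term split with $\hat{L}_\iota(\hat{w}_\iota,D)\le \hat{L}_\iota(w^*,D)$, a pointwise Catoni/Chernoff bound at the deterministic point $w^*$ (costing $\frac{\iota}{2}\sup_{w}\mathbb{E}(y-\langle x,w\rangle)^2+\frac{1}{n\iota}\log\frac{1}{\eta}$), and a net-transfer for the data-dependent term, with the population side handled by $L_\mathcal{D}(\hat{w}_\iota)\le L_\mathcal{D}(\tilde{w}_\iota)+\zeta\tau$. The genuine gap is in your empirical-side transfer. You pull the increment out of the truncation via $|\psi'|\le 1$, reducing the difference $\hat{L}_\iota(\tilde{w}_\iota,D)-\hat{L}_\iota(\hat{w}_\iota,D)$ to the raw average $\frac{\zeta}{n}\sum_i\|x_i\|_2$, and then claim that ``the same truncated moment-generating-function argument'' controls it. It cannot: once you have exited $\psi$, nothing truncates $\|x_i\|_2$ any more, and under Assumption \ref{ass:6} the variable $\|x\|_2$ need not have a moment generating function at all, so no Chernoff step is available. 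The Catoni device works precisely because $e^{\pm\psi(u)}$ is dominated by the polynomial $1\pm u+\frac{u^2}{2}$, whose expectation requires only two moments; the Lipschitz step is exactly what discards that protection. With only a second moment, the raw average concentrates no better than Chebyshev, i.e.\ with $\mathrm{poly}(1/\eta)$ rather than $\log(1/\eta)$ tails, so your route cannot yield the stated bound holding with probability $1-2\eta$. Your closing paragraph correctly senses this obstacle, but the rescue you name---the bounded influence $|\psi'|\le 1$---is the same Lipschitz step and does not restore the truncation.

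The paper's actual mechanism (spelled out for the $\theta$-moment case in the proofs of Lemmas \ref{alemma:203} and \ref{alemma:204}; the present lemma is the $\theta=2$ specialization, from \cite{zhang2018ell_1}) uses \emph{monotonicity} of $\psi$ rather than its Lipschitzness: from $|y_i-\langle x_i,\hat{w}_\iota\rangle|\ge |y_i-\langle x_i,\tilde{w}_\iota\rangle|-\zeta\|x_i\|_2$ and $\psi$ non-decreasing, one gets $\hat{L}_\iota(\hat{w}_\iota,D)\ge \frac{1}{n\iota}\sum_{i=1}^n\psi\big(\iota|y_i-\langle x_i,\tilde{w}_\iota\rangle|-\iota\zeta\|x_i\|_2\big)$, keeping the perturbation \emph{inside} the truncation. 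Bounding $\mathbb{E}\exp\big(-\sum_i\psi(\cdot)\big)$ with $e^{-\psi(u)}\le 1-u+\frac{u^2}{2}$ then produces, uniformly over the net, the population terms $\iota\zeta\,\mathbb{E}\|x\|_2\le\iota\zeta\tau$ and $\frac{\iota^2}{2}\mathbb{E}\big(|y-\langle x,w\rangle|-\zeta\|x\|_2\big)^2\le \iota^2\,\mathbb{E}(y-\langle x,w\rangle)^2+\iota^2\zeta^2\tau^2$ via $(a-b)^2\le 2a^2+2b^2$; dividing by $n\iota$, adding the second $\zeta\tau$ from the population-side comparison and the $\frac{\iota}{2}$ variance term from the $w^*$ bound recovers exactly the constants $2\zeta\tau$, $\iota\zeta^2\tau^2$ and $\frac{3\iota}{2}=\iota+\frac{\iota}{2}$ in the statement. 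Two minor further points: Lemma \ref{alemma:101}, which you invoke, plays no role in this lemma, and as written your argument would need to intersect three high-probability events, not two.
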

 Thus, taking Lemma \ref{alemma:100}, (\ref{aeq:56}) and Lemma \ref{alemma:103} into (\ref{aeq:50}) we have with probability at least $1-4\eta$
 \begin{align}
  &L_\mathcal{D}(\tilde{w})-L_\mathcal{D}(w^*) \leq 3\zeta \tau+\iota \zeta^2\tau^2 +\frac{5\iota}{2}\sup_{w\in\mathcal{W}}\mathbb{E}(y-\langle x, w\rangle)^2 \notag \\
  &+\frac{3}{n\iota}\log \frac{N(\mathcal{W}, \zeta)}{\eta} +\frac{4\log 2}{n\iota \epsilon}\log \frac{N(\mathcal{W}, \zeta)}{\eta}. \label{eq:140}
 \end{align}
By  $\log N(\mathcal{W}, \zeta)\leq d\log \frac{3\Delta}{\zeta}$ and the following inequality we can complete the proof. 
 \begin{multline*}
 	\sup_{w\in\mathcal{W}}\mathbb{E}(y-\langle x, w\rangle)^2\\ \leq \mathbb{E}(y-\langle x^*, w\rangle)^2+ 2\mathbb{E}\|x\|_2^2 \sup_{w\in\mathcal{W}}\|w-w^*\|_2^2=O(\Delta^2\tau^2).
 \end{multline*}
For (\ref{eq:9}), note that using the same proof we can replace $\tau$ by $\mathbb{E}\|x\|_2$ in (\ref{eq:140}). By Assumption \ref{ass:2} we have $\mathbb{E}\|x\|_2\leq \tau\sqrt{d}$ and $\mathbb{E}\|x\|^2_2\leq d\tau^2$. Thus, take $\zeta=O(\frac{1}{n})$ and $\iota=O(\sqrt{\frac{\log n \log \frac{1}{\eta}}{\tau^2 n\epsilon}})$ we finish the proof. 
\end{proof}
\subsection{Bounded $\theta$-th moment case}
Actually, motivated by \cite{chen2020generalized}, for the $\ell_1$-regression problem, we can even relax the second moment assumption in Assumption \ref{ass:6} and \ref{ass:2} to finite $\theta$-th moment assumptions with any $\theta\in (1, 2)$. 
Similar to the second moment case, here we consider two cases: 
\begin{assumption}\label{ass:7}
		There exists an $\theta\in (1, 2)$ such that the $\theta$-th order moment of $x$ is bounded by $\tau^\theta<\infty$ for some constant $\tau$, \footnote{Here we use $\tau^\theta$ is for convenience to compare with the second moment case.} that is 
		$\mathbb{E}_{(x,y)\sim \mathcal{D}} \|x\|_2^\theta \leq \tau^\theta=O(1).$ 
\end{assumption}
\begin{assumption}\label{ass:3}
    	We assume that the second moment of each coordinate of $x$ is bounded by $\tau^\theta=O(1)$, that is 
	$\forall j\in[d], \mathbb{E}_{(x,y)\sim \mathcal{D}} x_j^\theta \leq \tau^\theta=O(1). $
\end{assumption}

Here our main idea is almost the same as in the bounded second-order moment case and we will still focus on the function $\hat{L}_\iota (w, D)$ in (\ref{eq:20}). However, here we will adjust the non-decreasing truncation function $\psi: \mathbb{R}\mapsto \mathbb{R}$ to make it satisfies the following inequality instead of (\ref{eq:21}): 
\begin{equation}\label{eq:25}
	-\log (1-x+\frac{|x|^\theta}{\theta}) \leq \psi(x) \leq \log (1+x+\frac{|x|^\theta}{\theta}).
	\end{equation}
Motived by (\ref{eq:22}), here we use the following specific form for $\psi$: 
	\begin{equation}
		\psi(x)= \begin{cases}\label{eq:26}
		-\log (1-x+\frac{|x|^\theta}{\theta}), &0\leq x\leq 1 \\
		\log \theta, &x\geq 1\\
		-\psi(-x), &x\leq 0
	\end{cases}
\end{equation}

\begin{algorithm}[!ht]
	\caption{Exponential mechanism for $\ell_1$-regression ($\theta$-th moment)}\label{alg:7}
	$\mathbf{Input}$: $D=\{(x_i, y_i)\}_{i=1}^n$; privacy parameter $\epsilon$; parameters $\iota, \zeta$; truncated empirical risk $\hat{L}_\iota$ in (\ref{eq:20}) with $\psi$ in (\ref{eq:26}).
	\begin{algorithmic}[1]
    \State Find a $\zeta$-net of $\mathcal{W}$ with covering number $N(\mathcal{W}, \zeta)$, denote it as $\tilde{W}_\zeta=\{w_1, \cdots, w_{N(\mathcal{W}, \zeta)}\}$. 
    \State Run the exponential mechanism  with the range $R=\tilde{W}_\zeta$ and the score function $u(D, w)=-\hat{L}_\iota(w, D)$.
     That is, output an $w\in \tilde{W}_\zeta$ with probability proportional to $\exp( \frac{-n\iota\epsilon \hat{L}_\iota(w, D)}{\log \theta})$. 
	\end{algorithmic}
\end{algorithm}
We can easily see it satisfies the inequality in (\ref{eq:25}). Moreover, its absolute value is upper bounded by $\log \theta$. That is the sensitivity of $\hat{L}_\iota (w, D)$ is upper bounded by $\frac{2\log \theta}{2n\iota}$. Therefore, we will use the exponential mechanism (see Algorithm \ref{alg:7} for details) and its output has the following utility. 

\begin{theorem}\label{thm:10}
	For any $\epsilon>0$, Algorithm \ref{alg:7} is $\epsilon$-DP. Moreover,  under Assumption \ref{ass:7},  given failure probability $\zeta$, for the output  $\tilde{w}$ we have the following with probability at least $1-\zeta$ for any  $\iota>0$
 \begin{align} L_\mathcal{D}(\tilde{w})-  L_\mathcal{D}(w^*) &\leq O\big(\zeta \tau + {\iota^{\theta-1}}\tau^\theta+ \notag \\ & {\iota^{\theta-1}\zeta^\theta}\tau^\theta +\frac{1}{n\iota \epsilon}\log \frac{N(\mathcal{W}, \zeta)}{\eta}\big). 
 \end{align}
Take $\zeta=O(\frac{1}{n})$ and $\iota=O(\frac{1}{\tau}(\frac{d\log \frac{1}{\zeta}}{n\epsilon})^\frac{1}{\theta})$ we have 
	 \begin{equation}\label{eq:28}
	 	L_\mathcal{D}(\hat{w}_\iota)-L_\mathcal{D}(w^*)\leq \tilde{O}(\tau (\frac{d\log n\log \frac{1}{\zeta}}{n\epsilon})^\frac{\theta-1}{\theta}). 
	 \end{equation}
Moreover, under Assumption \ref{ass:3}, set $\zeta=O(\frac{1}{n})$ and $\iota=O(\frac{1}{\tau}(\frac{\log n\log \frac{1}{\eta}}{n\epsilon})^\frac{\theta-1}{\theta})$ we have
	 \begin{equation}\label{eq:180}
	 	L_\mathcal{D}(\hat{w}_\iota)-L_\mathcal{D}(w^*)\leq {O}(\tau d (\frac{\log n\log \frac{1}{\zeta}}{n\epsilon})^\frac{\theta-1}{\theta}). 
	 \end{equation}
\end{theorem}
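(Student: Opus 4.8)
The plan is to follow the proof of Theorem \ref{thm:9} line by line, replacing every second-moment quantity by its $\theta$-th moment analogue. Only two things genuinely change: the truncation $\psi$ in (\ref{eq:26}) now obeys the two-sided bound (\ref{eq:25}) instead of (\ref{eq:21}), and consequently the concentration error acquires a factor $\iota^{\theta-1}$ where the second-moment analysis had $\iota$; this single change is what converts the $\sqrt{d/(n\epsilon)}$ rate into the $(d/(n\epsilon))^{(\theta-1)/\theta}$ rate. Privacy is immediate and identical to before: since $\psi$ in (\ref{eq:26}) is bounded in absolute value by $\log\theta$, replacing one sample perturbs $\hat{L}_\iota(w,D)=\frac{1}{n\iota}\sum_i\psi(\iota|y_i-\langle x_i,w\rangle|)$ by at most $O(\frac{\log\theta}{n\iota})$, so the score $u(D,w)=-\hat{L}_\iota(w,D)$ has sensitivity $O(\frac{\log\theta}{n\iota})$ and sampling $w$ with probability proportional to $\exp(-n\iota\epsilon\hat{L}_\iota(w,D)/\log\theta)$ is exactly the exponential mechanism, hence $\epsilon$-DP.

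For the utility I would first establish the $\theta$-th moment versions of Lemmas \ref{alemma:100}, \ref{alemma:101} and \ref{alemma:103}. The engine is a Catoni/PAC-Bayes exponential-moment estimate: writing $Z_w=y-\langle x,w\rangle$, the bound (\ref{eq:25}) gives $\exp(\psi(\iota Z_w))\leq 1+\iota Z_w+\frac{\iota^\theta}{\theta}|Z_w|^\theta$, hence $\mathbb{E}\exp(\psi(\iota Z_w))\leq\exp(\iota\mathbb{E}Z_w+\frac{\iota^\theta}{\theta}\mathbb{E}|Z_w|^\theta)$, and symmetrically for the lower tail. A Chernoff/Markov argument applied to $\sum_i\psi(\iota Z_{w,i})$ together with a union bound over the $\zeta$-net $\tilde{W}_\zeta$ of size $N(\mathcal{W},\zeta)$ then yields, simultaneously for all $w\in\tilde{W}_\zeta$ with probability $\geq 1-\eta$, a two-sided comparison of $\hat{L}_\iota(w,D)$ with $L_\mathcal{D}(w)$ up to an additive error $\frac{\iota^{\theta-1}}{\theta}\sup_w\mathbb{E}|Z_w|^\theta+\frac{1}{n\iota}\log\frac{N(\mathcal{W},\zeta)}{\eta}$; the factor $\iota^{\theta-1}$ is precisely the term $\frac{\iota^\theta}{\theta}\mathbb{E}|Z_w|^\theta$ divided by the $\frac{1}{\iota}$ prefactor in $\hat{L}_\iota$. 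The transfer between the continuous minimizer $\hat{w}_\iota$ and its nearest net point $\tilde{w}_\iota$ is controlled by $L_\mathcal{D}(\tilde{w}_\iota)-L_\mathcal{D}(\hat{w}_\iota)\leq\zeta\mathbb{E}\|x\|_2$ and by $\mathbb{E}|\langle x,\hat{w}_\iota-\tilde{w}_\iota\rangle|^\theta\leq\zeta^\theta\mathbb{E}\|x\|_2^\theta$, which is where the $\zeta\tau$ and $\iota^{\theta-1}\zeta^\theta\tau^\theta$ terms come from. Chaining these exactly as in (\ref{aeq:50})--(\ref{eq:140}) produces the displayed general bound.

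It then remains to turn moments of $x$ into the stated rates. Under Assumption \ref{ass:7}, Jensen gives $\mathbb{E}\|x\|_2\leq(\mathbb{E}\|x\|_2^\theta)^{1/\theta}\leq\tau$ and, after absorbing $\Delta$ and the optimal $\ell_\theta$-risk as in the second-moment case, $\sup_w\mathbb{E}|Z_w|^\theta=O(\tau^\theta)$; with $\log N(\mathcal{W},\zeta)\leq d\log\frac{3\Delta}{\zeta}$, $\zeta=O(\frac{1}{n})$ and $\iota=O(\frac{1}{\tau}(\frac{d\log\frac{1}{\eta}}{n\epsilon})^{1/\theta})$ the variance-like term $\iota^{\theta-1}\tau^\theta$ and the covering term $\frac{1}{n\iota\epsilon}d\log n$ both evaluate to $\tau(\frac{d\log n\log\frac{1}{\eta}}{n\epsilon})^{(\theta-1)/\theta}$, giving (\ref{eq:28}). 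Under Assumption \ref{ass:3} I would instead use Minkowski's inequality, $(\mathbb{E}|\langle x,v\rangle|^\theta)^{1/\theta}\leq\sum_j|v_j|(\mathbb{E}|x_j|^\theta)^{1/\theta}\leq\tau\|v\|_1\leq\tau\sqrt{d}\|v\|_2$, together with $\mathbb{E}\|x\|_2\leq\tau d^{1/\theta}$; with the stated choice $\iota=O(\frac{1}{\tau}(\frac{\log\frac{1}{\eta}}{n\epsilon})^{1/\theta})$ the covering term $\frac{1}{n\iota\epsilon}d\log n$ dominates (the moment term carries only $d^{\theta/2}\leq d$) and equals $\tau d(\frac{\log n\log\frac{1}{\eta}}{n\epsilon})^{(\theta-1)/\theta}$, which is (\ref{eq:180}).

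The hard part is the first utility step, namely the generalized-Catoni concentration that replaces Lemmas \ref{alemma:100}--\ref{alemma:103}. One must check that $\psi$ in (\ref{eq:26}) really satisfies (\ref{eq:25}) on all of $\mathbb{R}$ (in particular on the flat region $x\geq 1$ where $\psi=\log\theta$), and that the exponential-moment inequality still goes through with only the $\theta$-th moment finite for $\theta<2$ and with the absolute value $|Z_w|$ inside the argument of $\psi$. Once this is in place, everything downstream is the same bookkeeping as in Theorem \ref{thm:9}.
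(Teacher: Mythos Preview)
Your proposal is correct and follows essentially the same route as the paper: the paper proves the $\theta$-th-moment analogues of Lemmas~\ref{alemma:100}--\ref{alemma:103} (stated there as Lemmas~\ref{alemma:200}, \ref{alemma:201}, \ref{alemma:203}) via precisely the Catoni-type exponential-moment argument you sketch, then chains them with the exponential-mechanism utility guarantee and the same parameter choices. The only cosmetic difference is in the Assumption~\ref{ass:3} step, where the paper bounds $\mathbb{E}\|x\|_2^\theta\leq\mathbb{E}\|x\|_\theta^\theta\leq d\tau^\theta$ directly rather than going through Minkowski; both routes yield the same final rate since the covering term carrying the full factor of $d$ dominates.
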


\begin{proof}[{\bf Proof of Theorem \ref{thm:10}}]
	The proof of $\epsilon$-DP is due to the sensitivity of the score function is bounded by $\frac{2\log \theta}{n \iota}$ and the exponential mechanism. Next we will proof the utility. For convenience we denote $w_\zeta=\arg\min_{w\in \tilde{W}_\zeta} \hat{L}_\iota(w, D)$ and the optimal solution  of (\ref{eq:20}) as $\hat{w}_\iota$. By the utility of exponential mechanism Lemma \ref{lemma:exp} and take $t=\log \frac{1}{\eta}$ we have with probability at least $1-\eta$, 
		$-\hat{L}_\iota(\tilde{w}, D)\geq -\hat{L}_\iota(w_\zeta, D)-\frac{4\log \theta}{n\iota \epsilon}\log \frac{N(\mathcal{W}, \zeta)}{\eta}.$  
	That is 
	\begin{equation}\label{aeq:200}
		\hat{L}_\iota(\tilde{w}, D)\leq  \hat{L}_\iota(w_\zeta, D)+\frac{4\log \theta}{n\iota \epsilon}\log \frac{N(\mathcal{W}, \zeta)}{\eta}.  
	\end{equation} 
	For the term $\hat{L}_\iota(\tilde{w}, D)$ in (\ref{aeq:200}) we have the following inequality.
	
  \begin{lemma}[\cite{chen2020generalized}]\label{alemma:200}
 For any given $\zeta$, we have the following inequality for all $w\in \tilde{W}_\zeta$ with probability at least $1-\eta$

 \begin{align*}
 	&\hat{L}_\iota(w, D)=\frac{1}{n\iota}\sum_{i=1}^n \psi (\iota|y_i-\langle x_i, w \rangle|)\notag \\
 	& \geq L_\mathcal{D}(w)-\frac{\iota^{\theta-1}}{\theta}\sup_{w\in \mathcal{W}}\mathbb{E}|y-\langle x, w \rangle|^\theta-\frac{1}{n\iota}\log \frac{N(\mathcal{W}, \zeta)}{\eta}. 
 \end{align*}	
 \end{lemma}

 For the term $\hat{L}_\iota(w_\zeta, D)$ in (\ref{aeq:200}), since $\tilde{W}_\zeta$ is a $\zeta$-net, thus there exists a $\tilde{w}_\iota \in \tilde{W}_\zeta$ such that $\|\tilde{w}_\iota-\hat{w}_\iota\|_2\leq \zeta$, where $\hat{w}_\iota=\arg\min_{w\in \mathcal{W}}\hat{L}_\iota (w, D)$. And by the definition we have 
 \begin{equation}\label{aeq:203}
 	\hat{L}_\iota(w_\zeta, D)\leq \hat{L}_\iota (\tilde{w}_\iota, D).
 \end{equation}
 For the term $\hat{L}_\iota (\tilde{w}_\iota, D)$ we have the following lemma 
 \begin{lemma}\label{alemma:201}
 For any given $\eta$, we have the following inequality for all $w\in \tilde{W}_\zeta$ with probability at least $1-\eta$
 	\begin{equation}
 		\hat{L}_\iota (w, D)\leq L_\mathcal{D}(w)+\frac{\iota^{\theta-1}}{\theta}\sup_{w\in\mathcal{W}}\mathbb{E}|y-\langle x, w\rangle|^\theta +\frac{1}{n\iota} \log \frac{N(\mathcal{W}, \zeta)}{\eta}. 
 	\end{equation}
 \end{lemma}

 Thus, combing with (\ref{aeq:203}) and Lemma \ref{alemma:201} we have with probability at least $1-\eta$
 \begin{align}
 	&\hat{L}_\iota(w_\zeta, D)\leq \hat{L}_\iota (\tilde{w}_\iota, D)\notag \\
 	&\leq L_\mathcal{D}(\tilde{w}_\iota)+\frac{\iota^{\theta-1}}{\theta}\sup_{w\in\mathcal{W}}\mathbb{E}|y-\langle x, w\rangle |^\theta +\frac{1}{n\iota} \log \frac{N(\mathcal{W}, \zeta)}{\eta} \nonumber \\
 	&\leq L_\mathcal{D}(\hat{w}_\iota)+\zeta\tau+\frac{\iota^{\theta-1}}{\theta}\sup_{w\in\mathcal{W}}\mathbb{E}|y-\langle x, w\rangle |^\theta +\frac{1}{n\iota} \log \frac{N(\mathcal{W}, \zeta)}{\eta}, \label{aeq:256}
 \end{align}
 In the following we will show the relation between $L_\mathcal{D}(\hat{w}_\iota)$ and $L_\mathcal{D}(w^*)$. We first show the following lemma: 
 
 \begin{lemma}[\cite{chen2020generalized}]\label{alemma:203}
 With probability at least $1-\eta$, 
  \begin{multline*}
 		L_\mathcal{D}(\hat{w}_\iota)-\hat{L}_\iota (\hat{w}_\iota, D)\leq 2\zeta \tau + \frac{(2\iota)^{\theta-1}}{\theta}\sup_{w\in\mathcal{W}}\mathbb{E}|y-\langle x, w\rangle |^\theta\\ +\frac{(2\iota)^{\theta-1}\zeta^\theta}{\theta}\tau^\theta+\frac{1}{n\iota}\log \frac{N(\mathcal{W}, \zeta)}{\eta}
 \end{multline*}
 \end{lemma}

 By Lemma \ref{alemma:203}, the definition of $\hat{w}_\iota$ we have with probability at least $1-2\eta$
 \begin{align}
 		L_\mathcal{D}(\hat{w}_\iota)&\leq \hat{L}_\iota (\hat{w}_\iota, D)+ 2\zeta \tau + \frac{(2\iota)^{\theta-1}}{\theta}\sup_{w\in\mathcal{W}}\mathbb{E}|y-\langle x, w\rangle |^\theta \notag \\
 	  &\qquad +\frac{(2\iota)^{\theta-1}\zeta^\theta}{\theta}\tau^\theta+\frac{1}{n\iota}\log \frac{N(\mathcal{W}, \zeta)}{\eta} \nonumber\\ 
 		&\leq  L_\mathcal{D}(w^*)+ 2\zeta \tau + \frac{(2\iota)^{\theta-1}}{\theta}\sup_{w\in\mathcal{W}}\mathbb{E}|y-\langle x, w\rangle |^\theta\notag \\
 	  &\qquad +\frac{2(2\iota)^{\theta-1}\zeta^\theta}{\theta}\tau^\theta+\frac{2}{n\iota}\log \frac{N(\mathcal{W}, \zeta)}{\eta}. \label{aeq:206}
 \end{align}
Where the last inequality of (\ref{aeq:206}) is due to the following with probability $1-\eta$, whose proof is the same as in the proof of Lemma \ref{alemma:201} (we omit it here)
\begin{equation}
	\hat{L}_\iota(w^*, D)\leq  L_\mathcal{D}(w^*)+\frac{\iota^{\theta-1}}{\theta}\sup_{w\in\mathcal{W}}\mathbb{E}|y-\langle x, w\rangle|^\theta +\frac{1}{n\iota} \log \frac{1}{\eta}
\end{equation}

 Thus, combing with (\ref{aeq:200}) , Lemma \ref{alemma:200}, (\ref{aeq:256})  and (\ref{aeq:206})  we have with probability at least $1-5\eta$
 \begin{multline}
 		 L_\mathcal{D}(\tilde{w}) \leq  L_\mathcal{D}(w^*)+ 3\zeta \tau + \frac{2(2\iota)^{\theta-1}}{\theta}\sup_{w\in\mathcal{W}}\mathbb{E}|y-\langle x, w\rangle |^\theta\\ +\frac{2(2\iota)^{\theta-1}\zeta^\theta}{\theta}\tau^\theta +\frac{8\log \theta}{n\iota \epsilon}\log \frac{N(\mathcal{W}, \zeta)}{\eta}
 \end{multline}
 Since  $\log N(\mathcal{W}, \zeta)\leq d\log \frac{3\Delta}{\zeta}$, we have 
 \begin{equation}\label{aeq:180}
 	 L_\mathcal{D}(\tilde{w})-L_\mathcal{D}(w^*)\leq {O}(\zeta \tau+ {\iota^{\theta-1}}\tau^\theta+ {\iota^{\theta-1}\zeta^\theta}\tau^\theta + \frac{d}{n\iota \epsilon}\log \frac{1}{\zeta\eta}),
 \end{equation}
 which is due to that $\sup_{w\in\mathcal{W}}\mathbb{E}|y-\langle x, w\rangle |^\theta\leq O(\mathbb{E}|y-\langle x, w^*\rangle |^\theta+\mathbb{E}|\langle x, w^*-w\rangle|^\theta )\leq O(\Delta^\theta \tau^\theta)$. 
 Take $\iota=(\frac{d\log\frac{1}{\zeta}}{n\epsilon})^\frac{1}{\theta}$ and $\zeta=\frac{1}{n}$ we can get the proof. 
 
 For (\ref{eq:180}), we can replace $\tau$ by $\mathbb{E}\|x\|_2$ in (\ref{aeq:180}). Under Assumption \ref{ass:3} and use the inequality $\mathbb{E}\|x\|_2^\theta\leq  \mathbb{E}\|x\|_\theta^\theta=d\tau^\theta$,   we have the result by taking $\iota=O(\frac{1}{\tau}(\frac{\log n}{n\epsilon})^\frac{\theta-1}{\theta})$ and $\zeta=O(\frac{1}{n})$. 
 
 \end{proof}

\bibliographystyle{IEEEtran}
\bibliography{ref}
%

\newpage 
\appendix 
The following omitted proofs haven been showed in  \cite{zhang2018ell_1} and \cite{chen2020generalized}, we include here for self-completeness. 
  \begin{proof}[Proof of Lemma \ref{alemma:100}]
 
First, note that our truncation function $\psi$ satisfies 
$\psi(x)\geq -\log (1-x+\frac{x^2}{2}).$ 
 Thus we have, 
 \begin{align}
 	&\mathbb{E}[\exp(-\sum_{i=1}^n \psi (\iota|y_i-\langle x_i, w \rangle|))] \nonumber \\
 	&\leq \mathbb{E}[|\Pi_{i=1}^n (1-\iota|y_i-\langle x_i, w \rangle|+\frac{\iota^2(y_i-\langle x_i, w \rangle)^2}{2} ) ]\label{aeq:51}\\
 	&=(\mathbb{E}[(1-\iota|y-\langle x, w \rangle|+\frac{\iota^2(y-\langle x, w \rangle)^2}{2} ) ])^n \nonumber \\
 	&=\big(1-\iota L_\mathcal{D}(w)+\frac{\iota^2}{2}\mathbb{E}(y-\langle x, w \rangle)^2\big)^n \nonumber \\ 
 	&\leq \exp\big(n(-\iota L_\mathcal{D}(w)+ \frac{\iota^2}{2}\mathbb{E}(y-\langle x, w \rangle)^2)\big)\label{aeq:52},
 \end{align}
 where (\ref{aeq:51}) is due to the previous inequality and (\ref{aeq:52}) is due the the inequality $1+x\leq e^x$. By the Chernoff's method, we have 
 \begin{align}
 &\text{Pr}\{-\sum_{i=1}^n \psi (\iota|y_i-\langle x_i, w \rangle|)\geq \notag \\
 &n(-\iota L_\mathcal{D}(w)+ \frac{\iota^2}{2}\mathbb{E}(y-\langle x, w \rangle)^2)+\log \frac{1}{\eta}\} \nonumber \\
 	&=\text{Pr}\{\exp(-\sum_{i=1}^n \psi (\iota|y_i-\langle x_i, w \rangle|))\notag \\
 	& \geq \exp\big(n(-\iota L_\mathcal{D}(w)+ \frac{\iota^2}{2}\mathbb{E}(y-\langle x, w \rangle)^2)+\log \frac{1}{\eta}\big)\} \nonumber \\
 	&\leq \frac{\mathbb{E}[\exp(-\sum_{i=1}^n \psi (\iota|y_i-\langle x_i, w \rangle|))] }{\mathbb{E}[\exp\big(n(-\iota L_\mathcal{D}(w)+ \frac{\iota^2}{2}\mathbb{E}(y-\langle x, w \rangle)^2)+\log \frac{1}{\eta}\big)] } \leq \eta.
 \end{align}
 Thus, with probability at least $1-\eta$ we have 
 \begin{align*}
 	&-\sum_{i=1}^n \psi (\iota|y_i-\langle x_i, w \rangle|)\notag \\ &\leq n(-\iota L_\mathcal{D}(w)+ \frac{\iota^2}{2}\mathbb{E}(y-\langle x, w \rangle)^2)+\log \frac{1}{\eta}\\
 	&\leq  n(-\iota L_\mathcal{D}(w)+ \frac{\iota^2}{2}\sup_{w\in \mathcal{W}}\mathbb{E}(y-\langle x, w \rangle)^2)+\log \frac{1}{\eta}.
 \end{align*}
 Take the union bound for all $w\in \tilde{W}_\zeta$ we complete the proof. 
 \end{proof}
 \begin{proof}[Proof of Lemma \ref{alemma:101}]
 First, note the truncation function $\psi$ satisfies 
 	$\psi(x)\leq \log (1+x+\frac{x^2}{2}). $ 
 Thus we have 
  \begin{align}
 	& \mathbb{E}[\exp(\sum_{i=1}^n \psi (\iota|y_i-\langle x_i, w \rangle|))] \notag \\  &\leq \mathbb{E}[|\Pi_{i=1}^n (1+\iota|y_i-\langle x_i, w \rangle|+\frac{\iota^2(y_i-\langle x_i, w \rangle)^2}{2} ) ]\label{aeq:54}\\
 	&=(\mathbb{E}[(1+\iota|y-\langle x, w \rangle|+\frac{\iota^2(y-\langle x, w \rangle)^2}{2} ) ])^n \nonumber \\
 	&=\big(1+\iota L_\mathcal{D}(w)+\frac{\iota^2}{2}\mathbb{E}(y-\langle x, w \rangle)^2\big)^n \nonumber \\ 
 	&\leq \exp\big(n(\iota L_\mathcal{D}(w)+ \frac{\iota^2}{2}\mathbb{E}(y-\langle x, w \rangle)^2)\big),\label{aeq:55}
 \end{align}
 where (\ref{aeq:54}) is due to the previous inequality and (\ref{aeq:55}) is due the the inequality $1+x\leq e^x$. By the Chernoff's method, 
  \begin{align}
 &\text{Pr}\{\sum_{i=1}^n \psi (\iota|y_i-\langle x_i, w \rangle|)\notag \\
 & \geq n(\iota L_\mathcal{D}(w)+ \frac{\iota^2}{2}\mathbb{E}(y-\langle x, w \rangle)^2)+\log \frac{1}{\eta}\} \nonumber \\
 	&=\text{Pr}\{\exp(\sum_{i=1}^n \psi (\iota|y_i-\langle x_i, w \rangle|))\notag \\
 	& \geq \exp\big(n(\iota L_\mathcal{D}(w)+ \frac{\iota^2}{2}\mathbb{E}(y-\langle x, w \rangle)^2)+\log \frac{1}{\eta}\big)\} \nonumber \\
 	&\leq \frac{\mathbb{E}[\exp(\sum_{i=1}^n \psi (\iota|y_i-\langle x_i, w \rangle|))] }{\mathbb{E}[\exp\big(n(\iota L_\mathcal{D}(w)+ \frac{\iota^2}{2}\mathbb{E}(y-\langle x, w \rangle)^2)+\log \frac{1}{\eta}\big)] } \leq \eta.\nonumber
 \end{align}
 Thus, with probability at least $1-\eta$ we have 
 \begin{align*}
 	& \sum_{i=1}^n \psi (\iota|y_i-\langle x_i, w \rangle|)\\
 	& \leq n(\iota L_\mathcal{D}(w)+ \frac{\iota^2}{2}\mathbb{E}(y-\langle x, w \rangle)^2)+\log \frac{1}{\eta}\\
 	&\leq  n(\iota L_\mathcal{D}(w)+ \frac{\iota^2}{2}\sup_{w\in \mathcal{W}}\mathbb{E}(y-\langle x, w \rangle)^2)+\log \frac{1}{\eta}.
 \end{align*}
 Take the union bound for all $w\in \tilde{W}_\zeta$ we complete the proof. 
 \end{proof}
   \begin{proof}[Proof of Lemma \ref{alemma:200}]
 
First, note that our truncation function $\psi$ satisfies
 	$\psi(x)\geq -\log (1-x+\frac{|x|^\theta}{\theta}). $ 
  Thus we have, 
 \begin{align}
 	&\mathbb{E}[\exp(-\sum_{i=1}^n \psi (\iota|y_i-\langle x_i, w \rangle|))]\notag \\ &\leq \mathbb{E}[|\Pi_{i=1}^n (1-\iota|y_i-\langle x_i, w \rangle|+\frac{\iota^\theta|y_i-\langle x_i, w \rangle|^\theta}{\theta} ) ]\label{aeq:201}\\
 	&=(\mathbb{E}[(1-\iota|y-\langle x, w \rangle|+\frac{\iota^\theta|y_i-\langle x_i, w \rangle|^\theta}{\theta} )])^n \nonumber \\
 	&=\big(1-\iota L_\mathcal{D}(w)+\frac{\iota^\theta}{\theta}\mathbb{E}|y-\langle x, w \rangle|^\theta\big)^n \nonumber \\ 
 	&\leq \exp\big(n(-\iota L_\mathcal{D}(w)+ \frac{\iota^\theta}{\theta}\mathbb{E}|y-\langle x, w \rangle|^\theta)\big),\label{aeq:202}
 \end{align}
 where (\ref{aeq:201}) is due to the previous inequality and (\ref{aeq:202}) is due the the inequality $1+x\leq e^x$. By the Chernoff's method, we have 
 \begin{align}
 &\text{Pr}\{-\sum_{i=1}^n \psi (\iota|y_i-\langle x_i, w \rangle|)\notag \\
 &\geq n(-\iota L_\mathcal{D}(w)+ \frac{\iota^\theta}{\theta}\mathbb{E}|y-\langle x, w \rangle|^\theta)+\log \frac{1}{\zeta}\} \nonumber \\
 	&=\text{Pr}\{\exp(-\sum_{i=1}^n \psi (\iota|y_i-\langle x_i, w \rangle|)) \notag\\
 	&\geq \exp\big(n(-\iota L_\mathcal{D}(w)+ \frac{\iota^\theta}{\theta}\mathbb{E}|y-\langle x, w \rangle|^\theta)+\log \frac{1}{\eta}\big)\} \nonumber \\
 	&\leq \frac{\mathbb{E}[\exp(-\sum_{i=1}^n \psi (\iota|y_i-\langle x_i, w \rangle|))] }{\mathbb{E}[\exp\big(n(-\iota L_\mathcal{D}(w)+ \frac{\iota^\theta}{\theta}\mathbb{E}|y-\langle x, w \rangle|^\theta+\log \frac{1}{\eta}\big)] } \leq \eta.
 \end{align}
 Thus, with probability at least $1-\eta$ we have 
 \begin{align*}
 	&-\sum_{i=1}^n \psi (\iota|y_i-\langle x_i, w \rangle|) \\ &\leq n(-\iota L_\mathcal{D}(w)+\frac{\iota^\theta}{\theta}\mathbb{E}|y-\langle x, w \rangle|^\theta)+\log \frac{1}{\eta}\\
 	&\leq  n(-\iota L_\mathcal{D}(w)+ \frac{\iota^\theta}{\theta}\sup_{w\in \mathcal{W}}\mathbb{E}|y-\langle x, w \rangle|^\theta)+\log \frac{1}{\eta}.
 \end{align*}
 Take the union bound for all $w\in \tilde{W}_\zeta$ we complete the proof. 
 \end{proof}
  \begin{proof}[Proof of Lemma \ref{alemma:201}]
 First, note the truncation function $\psi$ satisfies 
 	$\psi(x)\leq \log (1+x+\frac{|x|^\theta}{\theta}). $ 
 Thus we have 
  \begin{align}
 	&\mathbb{E}[\exp(\sum_{i=1}^n \psi (\iota|y_i-\langle x_i, w \rangle|))] \notag 
 	\\ &\leq \mathbb{E}[|\Pi_{i=1}^n (1+\iota|y_i-\langle x_i, w \rangle|+\frac{\iota^\theta|y_i-\langle x_i, w \rangle|^\theta}{\theta} ) ]\label{aeq:204}\\
 	&=(\mathbb{E}[(1+\iota|y-\langle x, w \rangle|+\frac{\iota^\theta|y_i-\langle x_i, w \rangle|^\theta}{\theta} ) ])^n \nonumber \\
 	&=\big(1+\iota L_\mathcal{D}(w)+\frac{\iota^\theta}{\theta}\mathbb{E}|y-\langle x, w \rangle|^\theta\big)^n \nonumber \\ 
 	&\leq \exp\big(n(\iota L_\mathcal{D}(w)+ \frac{\iota^\theta}{\theta}\mathbb{E}|y-\langle x, w \rangle|^\theta\big)\label{aeq:2090},
 \end{align}
 where (\ref{aeq:204}) is due to the previous inequality and (\ref{aeq:2090}) is due the the inequality $1+x\leq e^x$. By the Chernoff's method, we have 
  \begin{align}
 &\text{Pr}\{\sum_{i=1}^n \psi (\iota|y_i-\langle x_i, w \rangle|)\notag \\
 &\geq n(\iota L_\mathcal{D}(w)+ \frac{\iota^\theta}{\theta}\mathbb{E}|y-\langle x, w \rangle|^\theta)+\log \frac{1}{\eta}\} \nonumber \\
 	&=\text{Pr}\{\exp(\sum_{i=1}^n \psi (\iota|y_i-\langle x_i, w \rangle|)) \notag \\
 	&\geq \exp\big(n(\iota L_\mathcal{D}(w)+ \frac{\iota^\theta}{\theta}\mathbb{E}|y-\langle x, w \rangle|^\theta)+\log \frac{1}{\eta}\big)\} \nonumber \\
 	&\leq \frac{\mathbb{E}[\exp(\sum_{i=1}^n \psi (\iota|y_i-\langle x_i, w \rangle|))] }{\mathbb{E}[\exp\big(n(\iota L_\mathcal{D}(w)+ \frac{\iota^\theta}{\theta}\mathbb{E}|y-\langle x, w \rangle|^\theta)+\log \frac{1}{\eta}\big)] } \leq \eta.\nonumber
 \end{align}
 Thus, with probability at least $1-\eta$ we have 
 \begin{align*}
 	&\sum_{i=1}^n \psi (\iota|y_i-\langle x_i, w \rangle|) \leq n(\iota L_\mathcal{D}(w)+ \frac{\iota^\theta}{\theta}\mathbb{E}|y-\langle x, w \rangle|^\theta)+\log \frac{1}{\eta}\\
 	&\leq  n(\iota L_\mathcal{D}(w)+ \frac{\iota^\theta}{\theta}\sup_{w\in \mathcal{W}}\mathbb{E}|y-\langle x, w \rangle|^\theta)+\log \frac{1}{\eta}.
 \end{align*}
 Take the union bound for all $w\in \tilde{W}_\zeta$ we complete the proof. 
 \end{proof}
  \begin{proof}[Proof of Lemma \ref{alemma:203}]
 As we mentioned before, there exists a $\tilde{w}_\iota \in \tilde{W}_\zeta$ such that $\|\tilde{w}_\iota-\hat{w}_\iota\|_2\leq \zeta$. This implies that 
 \begin{multline*}
 	|y_i-\langle x_i, \hat{w}_\iota\rangle| \geq |y_i-\langle x_i, \tilde{w}_\iota\rangle|-|\langle x_i, \tilde{w}_\iota-\hat{w}_\iota| \\ \geq |y_i-\langle x_i, \tilde{w}_\iota\rangle|-\zeta \|x_i\|_2. 
 \end{multline*}
 Since $\psi$ is non-decreasing, this implied that 
 \begin{multline*}
 	\hat{L}_\alpha (\hat{w}_\iota, D)=\frac{1}{n\iota}\sum_{i=1}^n \psi(\iota|y_i-\langle x_i, \hat{w}_\iota\rangle|)\\ \geq \frac{1}{n\iota}\sum_{i=1}^n \psi(\iota|y_i-\langle x_i, \tilde{w}_\iota\rangle|-\iota\zeta \|x_i\|_2). 
 \end{multline*}
 We then proof the following lemma 
 \begin{lemma}\label{alemma:204}
 	For any $w\in \tilde{W}_\zeta$, with probability at least $1-\eta$, the following inequality holds: 
 	\begin{multline*}
 		-\frac{1}{n\iota}\sum_{i=1}^n \psi(\iota|y_i-\langle x_i, w \rangle|-\iota\zeta \|x_i\|_2)\leq -L_\mathcal{D}(w)+\zeta \tau\\ + \frac{(2\iota)^{\theta-1}}{\theta}\sup_{w\in\mathcal{W}}\mathbb{E}|y-\langle x, w\rangle |^\theta+ \frac{(2\iota)^{\theta-1}\zeta^\theta}{\theta}\tau^\theta+\frac{1}{n\iota}\log \frac{N(\mathcal{W}, \zeta)}{\eta}. 
 	\end{multline*}
 	
 \end{lemma}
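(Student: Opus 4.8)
The plan is to mimic the exponential-moment (Chernoff) strategy already used for Lemma \ref{alemma:200} and Lemma \ref{alemma:201}, now applied to the \emph{shifted} residual. First I would abbreviate $r_i := \iota|y_i-\langle x_i, w\rangle| - \iota\zeta\|x_i\|_2$ and invoke the left-hand bound on $\psi$ from (\ref{eq:25}), namely $\psi(x) \geq -\log(1 - x + \frac{|x|^\theta}{\theta})$, which gives the pointwise bound $\exp(-\psi(r_i)) \leq 1 - r_i + \frac{|r_i|^\theta}{\theta}$. Taking a product over the i.i.d.\ sample,
\begin{equation*}
\mathbb{E}\Big[\exp\big(-\sum_{i=1}^n \psi(r_i)\big)\Big] \leq \Big(1 - \mathbb{E}[r] + \tfrac{1}{\theta}\mathbb{E}|r|^\theta\Big)^n,
\end{equation*}
where $r = \iota|y-\langle x, w\rangle| - \iota\zeta\|x\|_2$.

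The heart of the argument is then to control the two moments. For the first moment, $\mathbb{E}[r] = \iota L_\mathcal{D}(w) - \iota\zeta\,\mathbb{E}\|x\|_2$, and by Jensen's inequality together with Assumption \ref{ass:7} one has $\mathbb{E}\|x\|_2 \leq (\mathbb{E}\|x\|_2^\theta)^{1/\theta} \leq \tau$, so that $-\mathbb{E}[r] \leq -\iota L_\mathcal{D}(w) + \iota\zeta\tau$. For the $\theta$-th moment, the triangle inequality gives $|r| \leq \iota|y-\langle x, w\rangle| + \iota\zeta\|x\|_2$, and the convexity inequality $(a+b)^\theta \leq 2^{\theta-1}(a^\theta + b^\theta)$, valid for $\theta \geq 1$, yields
\begin{equation*}
\mathbb{E}|r|^\theta \leq 2^{\theta-1}\iota^\theta\big(\sup_{w\in\mathcal{W}}\mathbb{E}|y-\langle x, w\rangle|^\theta + \zeta^\theta\tau^\theta\big).
\end{equation*}
Substituting both estimates and using $1 + t \leq e^t$ turns the right-hand side above into $\exp\big(n(-\iota L_\mathcal{D}(w) + \iota\zeta\tau + \tfrac{(2\iota)^{\theta-1}\iota}{\theta}\sup_w\mathbb{E}|y-\langle x, w\rangle|^\theta + \tfrac{(2\iota)^{\theta-1}\iota\zeta^\theta}{\theta}\tau^\theta)\big)$, where the factor $(2\iota)^{\theta-1}$ (rather than the $\iota^{\theta-1}$ appearing in Lemma \ref{alemma:200}) is exactly the price of the $2^{\theta-1}$ split.

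Finally, I would apply Markov's inequality to the nonnegative variable $\exp(-\sum_i \psi(r_i))$, just as in the earlier proofs: taking the threshold to be the exponential bound above times $\eta^{-1}$ makes the failure probability at most $\eta$, and dividing the resulting inequality through by $n\iota$ produces the claim for a single fixed $w$ with an additive $\frac{1}{n\iota}\log\frac{1}{\eta}$ term. A union bound over the $N(\mathcal{W},\zeta)$ points of $\tilde{W}_\zeta$ then upgrades $\log\frac{1}{\eta}$ to $\log\frac{N(\mathcal{W},\zeta)}{\eta}$, completing the proof. The main obstacle is the $\theta$-th moment estimate: unlike the second-moment lemmas, the shift $-\iota\zeta\|x\|_2$ cannot be absorbed cleanly by completing a single square, so one must pass through the $(a+b)^\theta$ convexity bound and separately bound $\mathbb{E}\|x\|_2^\theta$ by $\tau^\theta$ — and this is precisely where the $\frac{(2\iota)^{\theta-1}\zeta^\theta}{\theta}\tau^\theta$ term and the doubling in the exponent come from.
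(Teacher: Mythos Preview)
Your proposal is correct and follows essentially the same argument as the paper's own proof: both use the lower bound on $\psi$ from (\ref{eq:25}), pass to an exponential moment, bound $-\mathbb{E}[r]$ via $\mathbb{E}\|x\|_2\le\tau$, split the $\theta$-th moment of the shifted residual using $(a+b)^\theta\le 2^{\theta-1}(a^\theta+b^\theta)$ (which is exactly where the $(2\iota)^{\theta-1}$ arises), and finish with Chernoff plus a union bound over $\tilde{W}_\zeta$. The only cosmetic difference is that you make the triangle-inequality step $|r|\le \iota|y-\langle x,w\rangle|+\iota\zeta\|x\|_2$ explicit before invoking the convexity bound, whereas the paper writes the shifted term directly and applies the same split implicitly.
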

 \begin{proof}[Proof of Lemma \ref{alemma:204} ] 
 The idea of proof is almost the same as in the proof of Lemma \ref{alemma:201}. First, note that our truncation function $\psi$ satisfies the following 
 	$\psi(x)\geq -\log (1-x+\frac{|x|^\theta}{\theta}).$ 
 Thus we have, 
 \begin{align}
 	& \mathbb{E}[\exp(-\sum_{i=1}^n \psi (\iota|y_i-\langle x_i, w\rangle|-\iota\zeta \|x_i\|_2))] \nonumber \\
 	 &\leq \mathbb{E}[|\Pi_{i=1}^n (1-\iota|y_i-\langle x_i, w \rangle|\notag \\
 	  &\qquad +\iota\zeta \|x_i\|_2+\frac{\iota^\theta(|y_i-\langle x_i, w \rangle|-\zeta\|x_i\|_2)^\theta}{\theta}) ]\\
 	&=(\mathbb{E}[(1-\iota|y-\langle x, w \rangle|\notag \\
 	  &\qquad +\iota\zeta \|x\|_2+\frac{\iota^\theta(|y-\langle x, w \rangle|-\zeta\|x\|_2)^\theta}{\theta} )])^n \nonumber \\
 	&=\big(1-\iota L_\mathcal{D}(w)+\iota\zeta\mathbb{E}\|x\|_2+\frac{\iota^\theta}{\theta}\mathbb{E}(|y-\langle x, w \rangle|-\zeta\|x\|_2)^\theta
 	\big)^n \nonumber \\ 
 	&\leq \exp\big(n(-\iota L_\mathcal{D}(w)+ \frac{\iota^\theta 2^{\theta-1}}{\theta}\mathbb{E}|y-\langle x, w \rangle|^\theta\notag \\
 	  &\qquad  +\frac{\iota^\theta 2^{\theta-1}\zeta^\theta}{\theta}\mathbb{E}\|x\|_2^\theta )\big).\end{align}
 By the Chernoff's method, we have with probability at least $1-\eta$
 \begin{multline*}
 	-\sum_{i=1}^n \psi (\iota|y_i-\langle x_i, w\rangle|-\iota\zeta \|x_i\|_2)\\ \geq -\iota L_\mathcal{D}(w)+ \frac{\iota^\theta 2^{\theta-1}}{\theta}\mathbb{E}|y-\langle x, w \rangle|^\theta+\frac{\iota^\theta 2^{\theta-1}\zeta^\theta}{\theta}\mathbb{E}\|x\|_2^\theta+ \log \frac{1}{\eta}. 
 \end{multline*}
 Take the union and then we complete the proof. 
 \end{proof}
 Thus by Lemma \ref{alemma:204} we have with probability at least $1-\eta$
 \begin{align}
 		&\hat{L}_\alpha (\hat{w}_\iota, D)=\frac{1}{n\iota}\sum_{i=1}^n \psi(\iota|y_i-\langle x_i, \hat{w}_\iota\rangle|)\notag \\
 	  &\geq \frac{1}{n\iota}\sum_{i=1}^n \psi(\iota|y_i-\langle x_i, \tilde{w}_\iota\rangle|-\iota\zeta \|x_i\|_2) \nonumber \\
 		&\geq L_\mathcal{D}(\tilde{w}_\iota)-\zeta \tau-\frac{(2\iota)^{\theta-1}}{\theta}\sup_{w\in\mathcal{W}}\mathbb{E}|y-\langle x, w\rangle |^\theta \notag \\
 	  &\qquad  - \frac{(2\iota)^{\theta-1}\zeta^\theta}{\theta}\tau^\theta-\frac{1}{n\iota}\log \frac{N(\mathcal{W}, \zeta)}{\eta} \nonumber \\
 		& \geq  L_\mathcal{D}(\hat{w}_\iota)-2\zeta \tau-\frac{(2\iota)^{\theta-1}}{\theta}\sup_{w\in\mathcal{W}}\mathbb{E}|y-\langle x, w\rangle |^\theta \notag \\
 	  &\qquad - \frac{(2\iota)^{\theta-1}\zeta^\theta}{\theta}\tau^\theta-\frac{1}{n\iota}\log \frac{N(\mathcal{W}, \zeta)}{\eta},   \label{aeq:205}
 \end{align}
 where (\ref{aeq:205}) is due to 
  \begin{align*}
 	& L_\mathcal{D}(\tilde{w}_\iota)-L_\mathcal{D}(\hat{w}_\iota)=\mathbb{E}[|y-\langle x, \tilde{w}_\iota \rangle|-|y-\langle x, \hat{w}_\iota \rangle|] \notag \\
 	  & \leq \mathbb{E} |\langle x, \tilde{w}_\iota-\hat{w}_\iota\rangle|\leq \zeta\mathbb{E}\|x\|_2\leq \zeta \tau
 \end{align*}
 \end{proof}
\end{document}